\newcommand{\cmark}{\ding{51}}%
\newcommand{\xmark}{\ding{55}}%
\theoremstyle{definition}
\newtheorem{definition}[equation]{Definition}
\theoremstyle{plain}
\newtheorem{theorem}[equation]{Theorem}
\newtheorem{proposition}[equation]{Proposition}
\newtheorem*{namedtheorem}{\theoremname}
\newcommand{\theoremname}{testing}
\renewcommand{\vec}{\mathbf}
\title[Expander Graph Propagation]{Expander Graph Propagation}
\author[A. Deac, M. Lackenby and P. Veli\v{c}kovi\'{c}]{%
Andreea Deac\\
\institute{Mila, Universit\'{e} de Montr\'{e}al}\\
\email{deacandr@mila.quebec}\And
Marc Lackenby\\
\institute{University of Oxford}\\
\email{lackenby@maths.ox.ac.uk}\And
Petar Veli\v{c}kovi\'{c}\\
\institute{DeepMind}\\
\email{petarv@deepmind.com}
}
\begin{document}

\maketitle

\begin{abstract}
Deploying graph neural networks (GNNs) on whole-graph classification or regression tasks is known to be challenging: it often requires computing node features that are mindful of both local interactions in their neighbourhood and the global context of the graph structure. GNN architectures that navigate this space need to avoid pathological behaviours, such as bottlenecks and oversquashing, while ideally having linear time and space complexity requirements. In this work, we propose an elegant approach based on propagating information over \emph{expander graphs}. We leverage an efficient method for constructing expander graphs of a given size, and use this insight to propose the EGP model. We show that EGP is able to address all of the above concerns, while requiring minimal effort to set up, and provide evidence of its empirical utility on relevant graph classification datasets and baselines in the Open Graph Benchmark. Importantly, using expander graphs as a template for message passing necessarily gives rise to negative curvature. While this appears to be counterintuitive in light of recent related work on oversquashing, we theoretically demonstrate that negatively curved edges are likely to be \textbf{required} to obtain scalable message passing without bottlenecks. To the best of our knowledge, this is a previously unstudied result in the context of graph representation learning, and we believe our analysis paves the way to a novel class of scalable methods to counter oversquashing in GNNs.
\end{abstract}

\section{Introduction}

Graph neural networks (GNNs) are a flexible class of models for learning representations over graph-structured data \citep{hamilton2020graph}. Their versatility \citep{kipf2016semi,velivckovic2017graph,gilmer2017neural} and generality \citep{bronstein2021geometric,velivckovic2022message} has made them a very attractive approach, leading to considerable application in areas as diverse as virtual drug screening \citep{stokes2020deep}, traffic prediction \citep{derrow2021eta}, combinatorial chip design \citep{mirhoseini2021graph} and pure mathematics \citep{blundell2021towards,davies2021advancing}.

Most GNNs rely on repeatedly propagating information between neighbouring nodes in the graph. This is commonly expressed in the \emph{message passing} \citep{gilmer2017neural} paradigm: nodes send vector-based \emph{messages} to each other along the edges of the graph, and nodes update their representations by \emph{aggregating} all the messages sent to them, in a permutation-invariant manner. Under many industrially-relevant tasks, this paradigm is very potent, often allowing for highly scalable model variants \citep{huang2020combining,tailor2021we,wu2019simplifying}.

However, in many areas of scientific interest, purely local interactions are likely insufficient. Among the principal graph tasks, \emph{graph classification} is perhaps most ripe with such situations: to meaningfully attach a label to a graph, in many cases it is insufficient to treat graphs as ``bags of nodes''. For example, when classifying a molecule for its potency as a candidate drug \citep{stokes2020deep}, the label is driven by complex substructure interactions in the molecule \citep{duvenaud2015convolutional}, rather than a na\"{i}ve sum of atom-level effects. 

Accordingly, GNNs deployed in this regime need to update node features in a manner that is mindful of the \emph{global} properties of the graph. It quickly became apparent that it is often inadequate to merely stack more message passing layers over the input graph. In fact, for many graph classification tasks, such approaches may be weaker than discarding the graph structure altogether \citep{errica2019fair,luzhnica2019graph}. Now, it is well-understood that stacking many local layers leaves GNNs vulnerable to pathological behaviours such as oversquashing \citep{alon2020bottleneck}. Intuitively, oversquashing occurs when nodes need to store quantities of information that are \emph{exponentially} increasing with model depth \citep[Section 5]{alon2020bottleneck}. Such nodes often arise in the vicinity of \emph{bottlenecks} in a graph---small collections of edges which are responsible for carrying representations between large groups of nodes. One typical example of such a bottleneck can be found in a \emph{barbell graph} \includegraphics[width=8em]{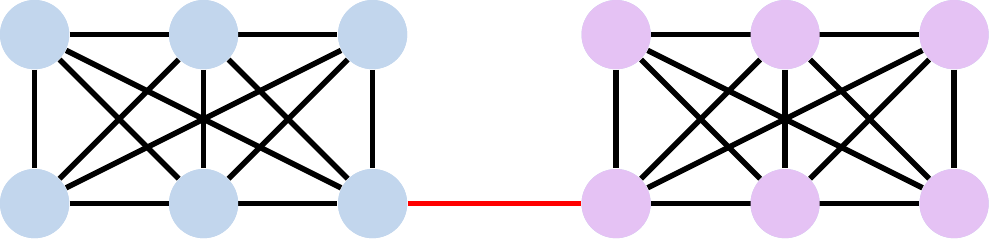}, where the red edge is under significant representational pressure to transport information between the two communities. 

Within this space, we are interested in proposing a method that satisfies \emph{four} desirable criteria: ({\bf C1}) it is capable of propagating information \emph{globally} in the graph; ({\bf C2}) it is \emph{resistant} to the oversquashing effect and does not introduce bottlenecks; ({\bf C3}) its time and space complexity remain \emph{subquadratic} (tighter than $O(|V|^2)$ for sparse graphs); and ({\bf C4}) it requires \emph{no dedicated preprocessing} of the input. Satisfying all four of these criteria simultaneously is challenging, and we will survey many of the popular approaches in the next section---demonstrating ways in which they fail to meet some of them.

In this paper, we identify \emph{expander graphs} as very attractive objects in this regard. Specifically, they offer a family of graph structures that are fundamentally \emph{sparse} ($|E| = O(|V|)$), while having \emph{low diameter}: thus, any two nodes in an expander graph may reach each other in a short number of hops, eliminating bottlenecks and oversquashing (see Figure \ref{fig:cayley}). Further, we will demonstrate an efficient way to construct a family of expander graphs (leveraging known theoretical results on the \emph{special linear group}, $\mathrm{SL}(2, \mathbb{Z}_n)$). Once an expander graph of appropriate size is constructed, we can perform a certain number of GNN \emph{propagation} steps over its structure to globally distribute the nodes' features. Accordingly, we name our method \emph{expander graph propagation} ({\bf EGP}).

A key contribution of our work extends the implications of prior art on oversquashing via curvature analysis \citep{topping2021understanding}. According to \citep{topping2021understanding}, negatively curved edges are causing the oversquashing effect---yet, counterintuitively, the edges of the expander graphs we construct will \emph{always be negatively curved}! We prove, however, that our expanders can never be sufficiently negatively curved to trigger the conditions necessary for the results in \citep{topping2021understanding} to be applicable, and show that the existence of negatively curved edges might in fact be \textbf{required} in order to have sparse communication without bottlenecks.
\begin{figure}
\includegraphics[height=0.3\linewidth]{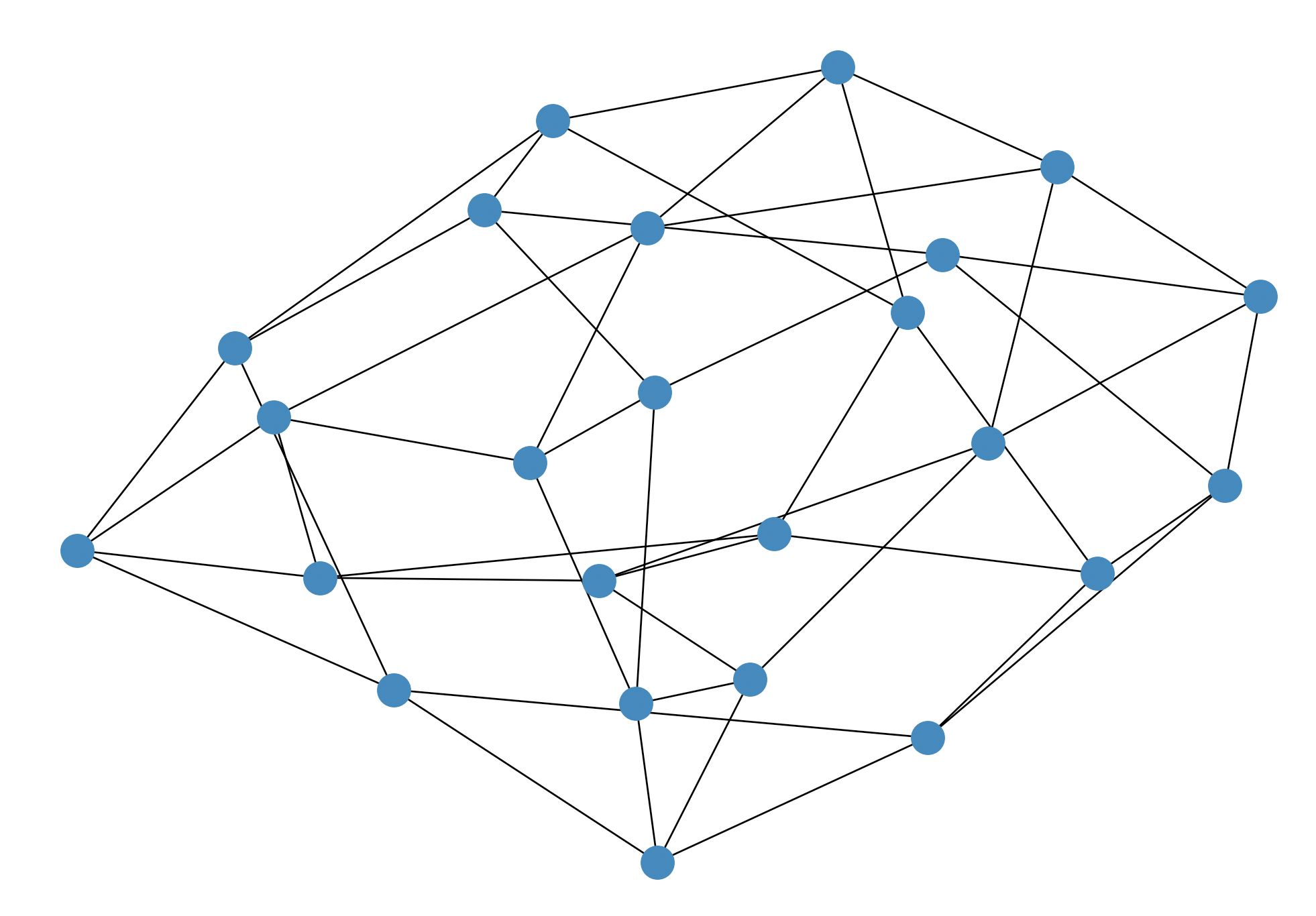}\hfill
\includegraphics[height=0.3\linewidth]{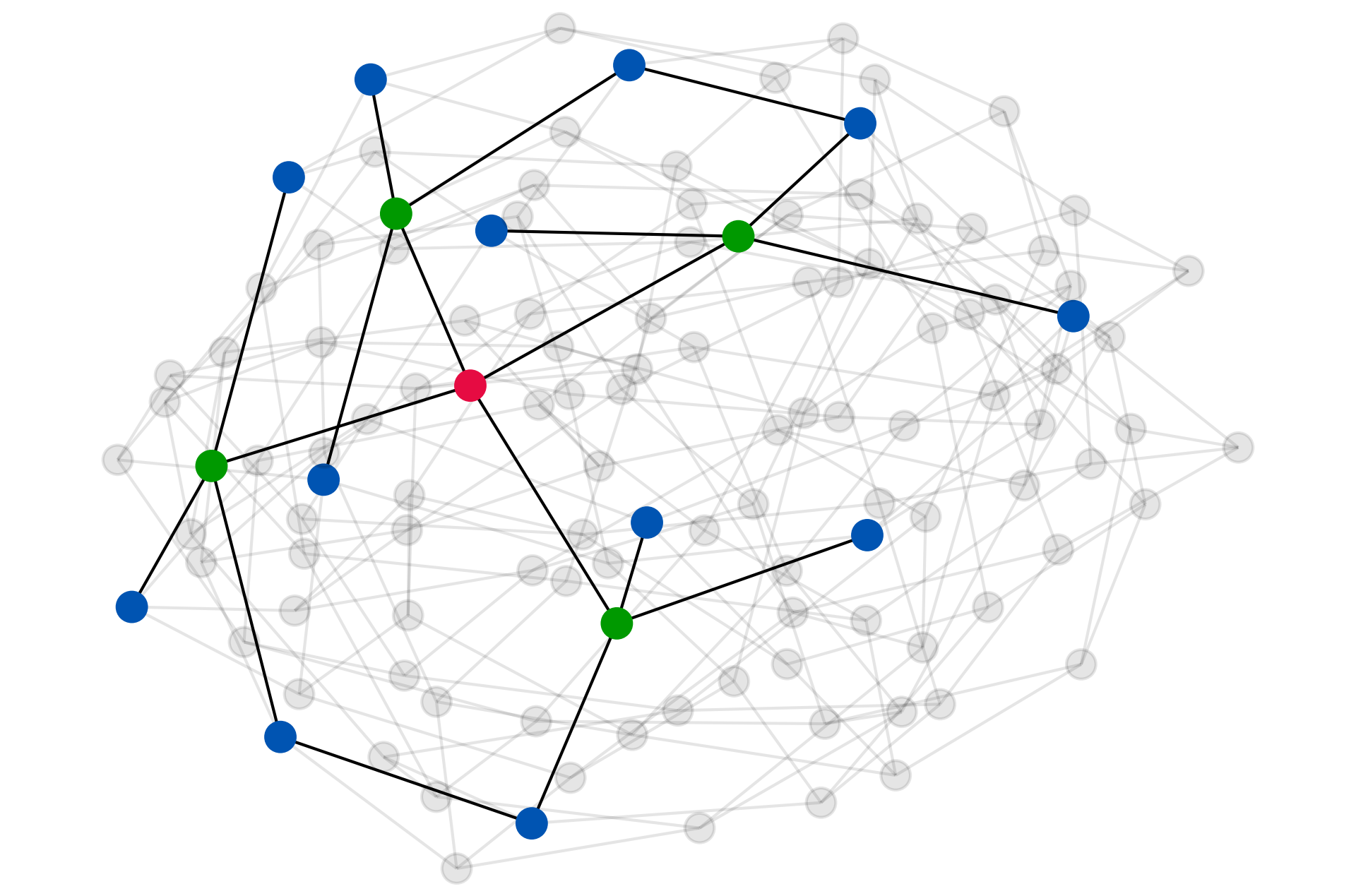}
\caption{{\bf Left}: The Cayley graph of $\mathrm{SL}(2, \mathbb{Z}_3)$, constructed using our method. It has $|V|=24$ nodes and it is $4$-regular (implying $|E| = 2|V|$), hence it is sparse. Despite its sparsity, it is highly interconnected: any node is reachable from any other node by no more than $4$ hops. Hence, it can serve as a strong ``template'' for globally propagating node features with a GNN. {\bf Right}: The Cayley graph of $\mathrm{SL}(2, \mathbb{Z}_5)$, constructed in an analogous way (with $|V|=120$ nodes). A $2$-hop neighbourhood of one node (in red) is highlighted, demonstrating its tree-like local structure.}
\label{fig:cayley}
\end{figure}


\section{Related work}\label{sec:relwork}

We begin with a survey of the many prior approaches to handling global context in graph representation learning, evaluating them carefully against our four desirable criteria ({\bf C1}--{\bf C4}; cf. Table \ref{tbl:egp}). This list is by no means exhaustive, but should be indicative of the most important directions.

\begin{table}
    \centering
    \caption{A summary of principal approaches to handling global context in graph representation learning (Section \ref{sec:relwork}). ``(\cmark)'' indicates that a criterion \emph{may} be satisfied, depending on the method's tradeoffs. Our proposal, the expander graph propagation (EGP) method, satisfies all four criteria.}
    \begin{tabular}{lcccc}
    \toprule
        {\bf Approach}  & ({\bf C1}) & ({\bf C2}) & ({\bf C3}) & ({\bf C4})\\ & {\scriptsize (global prop.)} & {\scriptsize (no bottlenecks)} & {\scriptsize (subquadratic)} & {\scriptsize (no dedicated preproc.)} \\ \midrule
        GNNs & \xmark & \xmark & \cmark & \cmark\\
        Sufficiently deep GNNs & \cmark & \xmark & \xmark & \cmark \\
        Master node \citep{battaglia2018relational,gilmer2017neural} & \cmark & \xmark & \cmark & \cmark \\
        Fully connected \citep{alon2020bottleneck,battaglia2016interaction,kreuzer2021rethinking,mialon2021graphit,santoro2017simple,ying2021transformers} & \cmark & \cmark & \xmark & \cmark \\
        Feature aug. \citep{bouritsas2022improving,grover2016node2vec,perozzi2014deepwalk,tang2015line,thakoor2021large,velivckovic2018deep} & \cmark & (\cmark) & (\cmark) & \xmark \\
        Graph rewiring \citep{klicpera2019diffusion,topping2021understanding,banerjee2022oversquashing} & \cmark & \cmark & \cmark & \xmark\\
        Hierarchical MP \citep{bodnar2021weisfeiler,bodnar2021weisfeilert,fey2020hierarchical,morris2020weisfeiler,morris2019weisfeiler,stachenfeld2020graph} & \cmark & \cmark & (\cmark) & \xmark\\ \midrule
        {\bf EGP} (ours) & \cmark & \cmark & \cmark & \cmark\\
        \bottomrule
    \end{tabular}\label{tbl:egp}
\end{table}

{\bf Stacking more layers.} As already highlighted, one way to achieve global information propagation is to have a deeper GNN. In this case, we are capable of satisfying ({\bf C1}) and ({\bf C4})---no dedicated preprocessing is needed. However, depending on the graph's diameter, we may need up to $O(|V|)$ layers to cover the graph, leading to quadratic complexity (violating ({\bf C3})) and introducing a vulnerability to bottlenecks ({\bf C2}), as theoretically and empirically demonstrated in \citep{alon2020bottleneck}.

{\bf Master nodes.} An attractive approach to introducing global context is to introduce a \emph{master node} to the graph, and connect it to all of the graph's nodes. This can be done either explicitly \citep{gilmer2017neural} or implicitly, by storing a ``global'' vector \citep{battaglia2018relational}. It trivially reduces the graph's diameter to $2$, introduces $O(1)$ new nodes and $O(|V|)$ new edges, and requires no dedicated preprocessing, hence it satisfies ({\bf C1}, {\bf C3}, {\bf C4}). However, these benefits come at the expense of introducing a bottleneck in the master node: it has a very challenging task (especially when graphs get larger) to continually incorporate information over a very large neighbourhood in a useful way. Hence it fails to satisfy ({\bf C2}).

{\bf Fully connected graphs.} The converse approach is to make \emph{every} node a master node: in this case, we make all pairs of nodes connected by an edge---this was initially proposed as a powerful method to alleviate oversquashing by \citep{alon2020bottleneck}. This strategy proved highly popular in the recent surge of Graph Transformers \citep{kreuzer2021rethinking,mialon2021graphit,ying2021transformers}, and is common for GNNs used in physical simulation \citep{battaglia2016interaction} or reasoning \citep{santoro2017simple} tasks. The graph's diameter is reduced to 1, no bottlenecks remain, and the approach does not require any dedicated preprocessing. Hence ({\bf C1}, {\bf C2}, {\bf C4}) are trivially satisfied. The main downside of this approach is the introduction of $O(|V|^2)$ edges, which means ({\bf C3}) can never be satisfied---and this approach will hence be prohibitive even for modestly-sized graphs.

{\bf Feature augmentation.} An alternative approach is to provide additional features to the GNN which directly identify the structural role each node plays in the graph \citep{bouritsas2022improving}. If done properly (i.e., if the computed features are relevant to the target), this can drastically improve expressive power. Hence, in theory, it is possible to satisfy ({\bf C1}) while not violating ({\bf C2}, {\bf C3}). However, computing appropriate features requires either specific domain knowledge, or appropriate pre-training \citep{grover2016node2vec,perozzi2014deepwalk,tang2015line,thakoor2021large,velivckovic2018deep}, in order to obtain such embeddings. Hence all of these gains come at the expense of failing to satisfy ({\bf C4}). 

{\bf Graph rewiring.} Another promising line of research involves modifying the edges of the original graph to alleviate bottlenecks. Popular examples of this approach involve using diffusion \citep{klicpera2019diffusion}---which diffuse additional edges through the application of kernels such as the personalised PageRank, and stochastic discrete Ricci flows \citep{topping2021understanding}---which surgically modify a small quantity of edges to alleviate the oversquashing effect on the nodes with negative Ricci curvature. Recent concurrent work \citep{banerjee2022oversquashing} also uses constructions inspired by expander graphs to randomly locally rewire a given input graph. If realised carefully, such approaches will not deviate too far from the original graph, while provably alleviating oversquashing; hence it is possible to satisfy ({\bf C1}, {\bf C2}, {\bf C3}). However, this comes at a cost of having to examine the input graph structure, with methods that do not necessarily scale easily with the number of nodes. As such, dedicated preprocessing is needed, failing to satisfy ({\bf C4}).

{\bf Hierarchical message passing.} Lastly, going beyond modifying the edges, it is also possible to introduce additional \emph{nodes} in the graph---each of them responsible for a particular \emph{substructure} in the graph\footnote{Master nodes are a special case: a single node is responsible for a ``substructure'' spanning the entire graph.}. If done carefully, it has the potential to drastically reduce the graph's diameter while not introducing bottlenecked nodes (hence, allowing us to satisfy ({\bf C1}, {\bf C2})). However, in prior work, a cost has to be paid for this, usually in the need for dedicated preprocessing. Prior proposals for hierarchical GNNs that remain scalable require a dedicated pre-processing step  \citep{bodnar2021weisfeiler,bodnar2021weisfeilert,fey2020hierarchical}, sometimes coupled with domain knowledge \citep{fey2020hierarchical}---thus failing to satisfy ({\bf C4}). In addition, such methods may require adding prohibitively large numbers of substructures \citep{morris2020weisfeiler,morris2019weisfeiler} or expensive pre-computation, e.g. computing the graph Laplacian eigenvectors \citep{stachenfeld2020graph}. This might make even ({\bf C3}) hard to satisfy.

We remark that our work is not the first to study expander graph-related topics in the context of GNNs. Specifically, the ExpanderGNN \citep{lutzeyer2021sparsifying} leverages expander graphs over neural network weights to sparsify the update step in GNNs. This is a direct application of Deep Expander Networks \citep{prabhu2018deep}, which studied such constructs over CNNs. With respect to our contributions, neither of these cases discuss expanders in the context of the computational graph for a GNN, nor attempt to propagate messages over such a structure. Further, neither satisfy all four of our desired criteria ({\bf C1}--{\bf C4}).

\section{Theoretical background}\label{sec:thy}

We now dedicate our attention to the key theoretical results over expander graphs, which will allow EGP to have favourable properties and be efficiently precomputable.

\begin{definition}
For a finite connected graph $G = (V(G), E(G))$, we consider functions $f \colon V(G) \rightarrow \mathbb{R}$. The \emph{Laplacian} $L f \colon V(G)\rightarrow \mathbb{R}$ of such a function is
defined to be
$$L f (v) = \mathrm{deg}(v) f(v) - \sum_{vw \in E(G)} f(w),$$
where $\mathrm{deg}(v)$ is the degree of the vertex $v$.
\end{definition}

The mapping
$L \colon \mathbb{R}^{V(G)} \rightarrow \mathbb{R}^{V(G)}$ sending a function $f$ to its Laplacian $L f$ is a linear transformation. It is not hard to show \cite{Chung}
that $L$ is symmetric with respect to the standard basis for $\mathbb{R}^{V(G)}$ and positive semi-definite and hence has non-negative real eigenvalues
$$0 = \lambda_0(G) < \lambda_1(G) \leq \lambda_2(G) \leq \dots.$$
The smallest eigenvalue is $0$ and its associated eigenspace consists of the constant functions (assuming $G$ is connected). 
The smallest positive eigenvalue, $\lambda_1(G)$, is central to the definition of expander graphs, as the next definition shows.

\begin{definition}
    An infinite collection $\{ G_i \}$ of finite connected graphs  is an \emph{expander family} if there is a constant $c > 0$ such that for all $G_i$ in the collection, $\lambda_1(G_i) \geq c$.
\end{definition}

Expander families \cite{Lubotzky, DSV, Kowalski} have many remarkable and useful properties, particularly when there is a uniform upper bound on the degree of the vertices of $G_i$.

\begin{definition}
Let $G$ be a finite graph.
For $A \subset V(G)$, its \emph{boundary} $\partial A$ is the collection of edges with one endpoint in $A$ and one endpoint not in $A$. The \emph{Cheeger constant} $h(G)$ is defined to be
$$
h(G) = \min \left \{ \frac{|\partial A|}{|A|} : A \subset V(G), 0 < |A| \leq |V(G)|/2 \right \}.
$$
\end{definition}

Thus, having a small Cheeger constant is equivalent to the graph having a `bottleneck', in the sense that there is a collection of edges $\partial A$ that, when removed, disconnects the vertices into two sets ($A$ and its complement, $V(G)\setminus A$), with the property that the sizes of $A$ and its complement are significantly larger than the size of $\partial A$.

Expander families can be reinterpreted using Cheeger constants, as follows (see, e.g., \cite{Alon,AlonMilman,Dodziuk,Tanner}):

\begin{theorem}\label{thm:cheeger}
Let $\{ G_i \}$ be an infinite collection of finite connected graphs with a uniform upper bound on their vertex degrees. Then the following are equivalent:
\begin{enumerate}
\item $\{ G_i \}$ is an expander family;
\item there is a constant $\epsilon > 0$ such that for all graphs in the collection, $h(G_i) \geq \epsilon$.
\end{enumerate}
\end{theorem}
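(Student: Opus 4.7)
The plan is to establish the \emph{discrete Cheeger inequality}
\[
\frac{h(G)^2}{2\Delta(G)} \;\leq\; \lambda_1(G) \;\leq\; 2\, h(G),
\]
where $\Delta(G)$ denotes the maximum vertex degree of $G$, and then read off the equivalence. Indeed, if (1) holds then the upper bound gives $h(G_i) \geq \lambda_1(G_i)/2 \geq c/2$, so we may set $\epsilon = c/2$. Conversely, if (2) holds and the uniform degree bound is $\Delta$, the lower bound gives $\lambda_1(G_i) \geq \epsilon^2/(2\Delta)$, so $c = \epsilon^2/(2\Delta)$ works. The bounded-degree hypothesis enters only in the direction (2) $\Rightarrow$ (1), and is essential there: otherwise stars $K_{1,n}$ furnish a counterexample, with $h$ bounded away from $0$ while $\lambda_1$ is not.

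For the easier upper bound I would use the Rayleigh characterization
\[
\lambda_1(G) = \min_{f \perp \mathbf{1},\, f \neq 0} \frac{\sum_{vw \in E(G)} (f(v)-f(w))^2}{\sum_{v \in V(G)} f(v)^2},
\]
which follows from the definition of $L$ after a short summation-by-parts. Given any $A \subset V(G)$ with $0 < |A| \leq |V(G)|/2$, I plug in the test function $f$ equal to $|V(G)\setminus A|$ on $A$ and $-|A|$ on $V(G)\setminus A$. This is orthogonal to the constants; only boundary edges contribute to the numerator, each by $|V(G)|^2$; and the denominator simplifies to $|A|\,|V(G)\setminus A|\,|V(G)|$. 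Using $|V(G)\setminus A| \geq |V(G)|/2$, the resulting bound $\lambda_1 \leq |\partial A|\,|V(G)|/(|A|\,|V(G)\setminus A|)$ collapses to $\lambda_1 \leq 2|\partial A|/|A|$; minimising over $A$ gives $\lambda_1 \leq 2h(G)$.

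The main obstacle is the lower bound $\lambda_1 \geq h(G)^2/(2\Delta)$. Here I would take a $\lambda_1$-eigenfunction $f$, shift it by a constant so that $\{f > 0\}$ has at most $|V(G)|/2$ vertices, and replace $f$ by its positive part $f_+ := \max(f,0)$. A direct Dirichlet-form computation shows $f_+$ still satisfies $\sum_{vw \in E}(f_+(v)-f_+(w))^2 \leq \lambda_1 \sum_v f_+(v)^2$. The crux is then the coarea estimate
\[
\sum_{vw \in E(G)} \bigl| f_+(v)^2 - f_+(w)^2 \bigr| \;\geq\; h(G) \sum_{v \in V(G)} f_+(v)^2,
\]
which I would prove by sorting the support of $f_+$ in decreasing value and telescoping over the superlevel sets $A_t = \{f_+ \geq t\}$, each of which has $|A_t| \leq |V(G)|/2$ and hence $|\partial A_t| \geq h(G)|A_t|$ by definition of the Cheeger constant. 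Combining this with the Cauchy--Schwarz factorisation $|f_+(v)^2 - f_+(w)^2| = |f_+(v)-f_+(w)|\cdot|f_+(v)+f_+(w)|$, followed by $(a+b)^2 \leq 2(a^2+b^2)$ and the degree bound $\sum_{vw \in E}(f_+(v)^2+f_+(w)^2) \leq \Delta \sum_v f_+(v)^2$, assembles into the claimed inequality.

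The only delicate step is lining up the telescoping argument with the $|A|\leq |V(G)|/2$ restriction in the definition of $h$: this is precisely why one must first shift $f$ and pass to $f_+$, rather than working with the eigenfunction directly. Everything else is algebraic bookkeeping, and both bounds combine with the uniform degree hypothesis to yield the equivalence.
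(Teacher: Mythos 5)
Your overall plan is exactly the standard one: the paper itself gives no proof of Theorem \ref{thm:cheeger} (it cites Alon, Alon--Milman, Dodziuk, Tanner), and the two-sided discrete Cheeger--Buser inequality $h(G)^2/(2\Delta(G)) \le \lambda_1(G) \le 2h(G)$, combined with the uniform degree bound, is precisely what those references supply. Your easy direction is correct as written: the test function taking values $|V\setminus A|$ on $A$ and $-|A|$ on its complement is orthogonal to constants, the numerator is $|\partial A|\,|V|^2$, the denominator is $|A|\,|V\setminus A|\,|V|$, and $|V\setminus A|\ge |V|/2$ gives $\lambda_1 \le 2|\partial A|/|A|$. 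The way you extract the equivalence from the two bounds is also fine.

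The hard direction, however, contains a genuine gap in the set-up you insist on. You take a $\lambda_1$-eigenfunction $f$, \emph{shift it by a constant} so that $\{f>0\}$ has at most $|V|/2$ vertices, and then assert that a ``direct Dirichlet-form computation'' gives $\sum_{vw\in E}(f_+(v)-f_+(w))^2 \le \lambda_1 \sum_v f_+(v)^2$. That computation rests on the pointwise identity $Lf(v)=\lambda_1 f(v)$ on the positive support (one sums by parts and uses $Lf_+(v)\le Lf(v)$ there), and shifting destroys it: for $g=f-c$ with $c>0$ the same argument only yields $\sum_{vw\in E}(g_+(v)-g_+(w))^2 \le \lambda_1\sum_v g_+(v)^2 + \lambda_1 c\sum_{v} g_+(v)$, with an uncontrolled extra term. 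Worse, a useful shift need not exist at all: for the star $K_{1,4}$ with eigenfunction $(0;1,1,1,-3)$ (eigenvalue $1$), any shift bringing the positive support down to at most half the vertices forces $f_+\equiv 0$, and your coarea step then says nothing. The standard repair is that no shift is needed: $\{f>0\}$ and $\{f<0\}$ are disjoint, so at least one has at most $|V|/2$ vertices, and after possibly replacing $f$ by $-f$ (still an eigenfunction) you take $f_+$ directly; your coarea/telescoping step, the Cauchy--Schwarz factorisation and the degree bound then go through verbatim. If you do want a median-type shift, you must abandon the eigen-equation, note that the Rayleigh quotient of $f-c$ is still at most $\lambda_1$ (since $f\perp\mathbf{1}$), split $f-c$ into positive and negative parts, and run the coarea argument on whichever part has the smaller quotient --- not on the positive part alone.

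One further small correction: the star is not a witness that the degree hypothesis is essential in this normalisation, since $K_{1,n}$ has $h=1$ and $\lambda_1=1$ for the combinatorial Laplacian. The hypothesis is indeed only used in (2)$\Rightarrow$(1), but a correct example is, say, a path of $L$ cliques of size $L$ with consecutive cliques completely joined: there $h$ stays bounded below while $\lambda_1=O(1/L)$ and the maximum degree is unbounded.
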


Hence, expander graphs have higher Cheeger constants and will hence experience less severe problems arising due to bottleneck edges. 
The following result is one of the many useful properties of expander families, and it concerns their \emph{diameter}. It was proved by Mohar \cite[Theorem 2.3]{Mohar}. See also \cite{AlonMilman}. 


\begin{theorem}\label{thm:diam}
The diameter $\mathrm{diam}(G)$ of a graph $G$ satisfies
$$\mathrm{diam}(G) \leq 2 \left \lceil \frac{\Delta(G) + \lambda_1(G)}{4 \lambda_1(G)} \log (|V(G)| - 1) \right \rceil,$$
where $\Delta(G)$ is the maximal degree of any vertex of $G$.
Hence, if $\{ G_i \}$ is an expander family of finite graphs with a uniform upper bound on their vertex degrees, then there is a constant $k > 0$ such that for all graphs in the family,
$$\mathrm{diam}(G_i) \leq k \log V(G_i).$$
\end{theorem}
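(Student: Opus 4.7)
My plan is to prove the diameter bound via a spectral polynomial argument leveraging the inequality $\lambda_{n-1}(G)\leq 2\Delta(G)$, where $n=|V(G)|$; this follows from a Rayleigh-quotient calculation, since for any eigenfunction $f$ of $L$ with eigenvalue $\lambda$ one has $\lambda\|f\|^2 = \sum_{uv\in E(G)}(f(u)-f(v))^2 \leq 2\Delta(G)\|f\|^2$. First I would fix vertices $u,v\in V(G)$ at graph distance $d=\mathrm{diam}(G)$ and recall that for any polynomial $p$ of degree strictly less than $d$, the matrix entry $(p(L))_{uv}$ must vanish: since $L=D-A$ with $D$ diagonal, each $(L^k)_{uv}$ is a signed count over walks of length $k$ from $u$ to $v$, which is empty whenever $k<d$.

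Expanding $p(L)=\sum_{i=0}^{n-1} p(\lambda_i)\phi_i\phi_i^\top$ in an orthonormal eigenbasis of $L$ with $\phi_0=n^{-1/2}\mathbf{1}$, the vanishing of the matrix entry becomes
$$
\frac{p(0)}{n} \;=\; -\sum_{i\geq 1} p(\lambda_i)\,\phi_i(u)\,\phi_i(v).
$$
If $p$ satisfies $p(0)=1$ and $|p(\lambda)|\leq\epsilon$ throughout $[\lambda_1(G),\lambda_{n-1}(G)]$, then Cauchy--Schwarz together with $\sum_{i\geq 1}\phi_i(u)^2\leq 1$ (and the analogous bound at $v$) forces $\epsilon\geq 1/n$. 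Hence any polynomial with $p(0)=1$ and $\max_{[\lambda_1,2\Delta]}|p|<1/(n-1)$ must have degree at least $d$, and this converts the diameter problem into a polynomial-minimax problem.

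The construction is the usual shift-and-scale of a Chebyshev polynomial $T_k$ onto $[\lambda_1,2\Delta]$; the classical estimate $T_k(x)\geq\tfrac12\bigl(x+\sqrt{x^2-1}\bigr)^k$ evaluated at $x_0 = (2\Delta+\lambda_1)/(2\Delta-\lambda_1)$ turns the requirement $1/T_k(x_0)<1/(n-1)$ into an explicit lower bound on $k$ logarithmic in $n-1$. Elementary manipulations, such as $\log\!\bigl((1+z)/(1-z)\bigr)\geq 2z$ applied with $z=\lambda_1/(2\Delta)$ to give $\log\!\bigl((2\Delta+\lambda_1)/(2\Delta-\lambda_1)\bigr)\geq \lambda_1/\Delta$, then massage this bound into the coefficient $(\Delta+\lambda_1)/(4\lambda_1)$ appearing in the theorem.

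The main obstacle I anticipate is matching the constants exactly: a direct Chebyshev bound gives an unwanted $\sqrt{\Delta/\lambda_1}$ dependence, so to extract the stated linear form one instead works with the simpler polynomial $p_k(\lambda) = (1-2\lambda/(2\Delta+\lambda_1))^k$ (whose sup-norm on $[\lambda_1,2\Delta]$ is $((2\Delta-\lambda_1)/(2\Delta+\lambda_1))^k$), combined with a halving device whereby one argues $\mathrm{diam}(G)\leq 2\,\mathrm{ecc}(v_0)$ for a well-chosen root $v_0$; this produces the outer factor of $2$ and the ceiling in the statement. The expander-family consequence is then immediate: for a collection $\{G_i\}$ with a uniform degree bound and, by Theorem~\ref{thm:cheeger}, $\lambda_1(G_i)\geq c>0$, the coefficient $(\Delta(G_i)+\lambda_1(G_i))/(4\lambda_1(G_i))$ is uniformly bounded above, so $\mathrm{diam}(G_i)\leq k\log|V(G_i)|$ for a single constant $k$ independent of $i$.
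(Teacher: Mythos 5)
A preliminary remark: the paper does not prove Theorem \ref{thm:diam} at all --- it quotes Mohar's Theorem 2.3 (with a pointer to Alon--Milman) --- so your attempt has to be judged as a self-contained proof rather than against an argument in the text. The machinery you set up is the standard Chung-style spectral one, and the first half is essentially sound: $(p(L))_{uv}=0$ whenever $\deg p<\mathrm{dist}(u,v)$, the expansion over an orthonormal eigenbasis with $\phi_0=n^{-1/2}\mathbf{1}$, the bound $\lambda_{n-1}\le 2\Delta$, and Cauchy--Schwarz. One small slip: to reach the threshold $1/(n-1)$ you must use $\sum_{i\ge 1}\phi_i(u)^2=1-1/n$ exactly, not merely $\le 1$; with $\le 1$ you only obtain the threshold $1/n$, yet you then assert the $1/(n-1)$ version. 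That is easily repaired.

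The genuine gap is the constant, and the ``halving device'' you lean on to recover it does not work. Your polynomial $p_k(\lambda)=(1-2\lambda/(2\Delta+\lambda_1))^k$ has sup-norm $((2\Delta-\lambda_1)/(2\Delta+\lambda_1))^k$ on $[\lambda_1,2\Delta]$, and $\log((1+z)/(1-z))\ge 2z$ at $z=\lambda_1/(2\Delta)$ turns the requirement into roughly $k>(\Delta/\lambda_1)\log(n-1)$, which is about twice Mohar's bound $2\lceil((\Delta+\lambda_1)/(4\lambda_1))\log(n-1)\rceil\approx((\Delta+\lambda_1)/(2\lambda_1))\log(n-1)$ when $\lambda_1\ll\Delta$. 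The step $\mathrm{diam}(G)\le 2\,\mathrm{ecc}(v_0)$ cannot bridge this: your polynomial argument bounds the distance between an \emph{arbitrary} pair of vertices, so it already bounds the diameter by $k$, and the eccentricity of any root $v_0$ receives exactly the same bound $k$ (the method is blind to the choice of $v_0$); doubling it only loses a factor of two. In the standard proofs the outer $2\lceil\cdot\rceil$ instead comes from splitting $(I-sL)^{2t}$ into two radius-$t$ propagations (equivalently, two radius-$t$ balls each covering more than half the vertices must intersect), whose per-step decay rate is the same as in your one-sided argument, so even that route only yields $2\lceil(2\Delta/(4\lambda_1))\log(n-1)\rceil$; obtaining $\Delta+\lambda_1$ rather than $2\Delta$ in the numerator requires an extra ingredient for the upper end of the spectrum (for instance, exploiting nonnegativity of the test vectors, for which $\sum_{uv\in E(G)}(f(u)-f(v))^2\le\Delta\|f\|^2$), which your sketch does not contain. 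So as written you prove $\mathrm{diam}(G)=O((\Delta/\lambda_1)\log|V(G)|)$ --- which is all the paper actually uses, and your deduction of the expander-family corollary from a uniform degree bound and $\lambda_1(G_i)\ge c$ is fine (note this lower bound is the \emph{definition} of an expander family here, so invoking Theorem \ref{thm:cheeger} is unnecessary) --- but you do not establish the precise inequality stated in the theorem.
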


Therefore, if we want to globally propagate information over an expander graph which has $|V|$ nodes, we only need $O(\log|V|)$ propagation steps to do so---yielding subquadratic complexity.

We showed that expanders will experience less severe problems arising due to bottleneck edges, with favourable propagation qualities. What is missing is an efficient method of constructing an expander of (roughly) $|V|$ nodes. To demonstrate such a method, we leverage known results from group theory.


\begin{definition}
A group $(\Gamma, \circ)$ is a set $\Gamma$ equipped with a \emph{composition} operation $\circ : \Gamma \times \Gamma \rightarrow \Gamma$ (written concisely by omitting $\circ$, i.e. $g\circ h = gh$, for $g, h \in\Gamma$), satisfying the following axioms:
\begin{itemize}
    \item \emph{(Associativity)} $(gh)l = g(hl)$, for $g, h, l\in\Gamma$.
    \item \emph{(Identity)} There exists a unique $e\in\Gamma$ satisfying $eg=ge=g$ for all $g\in\Gamma$.
    \item \emph{(Inverse)} For every $g\in\Gamma$ there exists a unique $g^{-1}\in\Gamma$ such that $gg^{-1} = g^{-1}g=e$.
\end{itemize}
\end{definition}

A group is hence a natural construct for reasoning about transformations that leave an object invariant (unchanged). Further, we define a relevant notion of a group's generating set:

\begin{definition}
Let $\Gamma$ be a group. A subset $S\subseteq\Gamma$ is a \emph{generating set} for $\Gamma$ if it can be used to ``generate'' all of $\Gamma$ via composition. Concretely, any element $g\in\Gamma$ can be expressed by composing elements in the generating set, or their inverses; that is, we can express $g=s_1^{\pm 1}s_2^{\pm 1}s_3^{\pm 1}\cdots s_{n-1}^{\pm 1}s_n^{\pm 1}$ for $s_i\in S$.
\end{definition}

Now we are ready to define a Cayley graph of a group w.r.t. its generating set.

\begin{definition}
Let $\Gamma$ be a group with a finite generating set $S$. Then the associated \emph{Cayley graph} $\mathrm{Cay}(\Gamma;S)$ has vertex set $\Gamma$ and it has an edge $g\rightarrow gs$ for each $g \in \Gamma$ and each $s \in S$. We say that $s$ is the \emph{label} on this edge. This is a potentially non-simple graph, as it may have edges with both endpoints on the same vertex and it may have multiple edges between a pair of vertices. In particular, when $s$ has order $2$, then we view the edge $g \rightarrow gs$ and the edge $gs \rightarrow gs^2 = g$ as distinct edges.
\end{definition}

Note that the degree of each vertex of a Cayley graph $\mathrm{Cay}(\Gamma;S)$ is $2|S|$. This is because each vertex $g$ is joined by edges to $gs$ and $gs^{-1}$ for each $s \in S$. Thus, we shall be particularly interested in the case where there is a uniform upper bound on $|S|$. The specific group we use for EGP is as follows.

For each positive integer $n$, the \emph{special linear group} $\mathrm{SL}(2,\mathbb{Z}_n)$ denotes the group of $2 \times 2$ matrices with entries that are integers modulo $n$ and with determinant $1$. One of its generating sets is:
$$
S_n = \left \{ 
\left ( 
\begin{matrix}
1 & 1 \\ 0 & 1
\end{matrix}
\right ),
\left ( 
\begin{matrix}
 1 & 0 \\ 1 & 1
\end{matrix}  
\right )
\right \}
.
$$

Central to our constructions is the following important result.

\begin{theorem}\label{thm:expander} The family of Cayley graph $\mathrm{Cay}(\mathrm{SL}(2,\mathbb{Z}_n); S_n )$ forms an expander family.
\end{theorem}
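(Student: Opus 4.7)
The plan is to deduce the uniform spectral gap of the discrete Laplacian on $\mathrm{Cay}(\mathrm{SL}(2,\mathbb{Z}_n); S_n)$ from Selberg's celebrated $3/16$ theorem, which gives a uniform lower bound on the smallest positive eigenvalue of the hyperbolic Laplacian on the modular surfaces $\mathbb{H}^2 / \Gamma(n)$. Here $\Gamma(n)$ is the principal congruence subgroup of level $n$, i.e., the kernel of the surjective reduction map $\pi_n \colon \mathrm{SL}(2,\mathbb{Z}) \to \mathrm{SL}(2,\mathbb{Z}_n)$. A preparatory combinatorial check is that $S_n = \{\pi_n(T), \pi_n(U)\}$ where $T = \left(\begin{smallmatrix} 1 & 1 \\ 0 & 1 \end{smallmatrix}\right)$ and $U = \left(\begin{smallmatrix} 1 & 0 \\ 1 & 1 \end{smallmatrix}\right)$ classically generate $\mathrm{SL}(2,\mathbb{Z})$; hence $S_n$ generates the quotient, and the Cayley graphs $\mathrm{Cay}(\mathrm{SL}(2,\mathbb{Z}_n); S_n)$ are connected and of uniformly bounded degree $4$.

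The heart of the argument is the spectral-to-combinatorial translation. Identifying $\ell^2(\mathrm{SL}(2,\mathbb{Z}_n))$ with $\ell^2(\Gamma(n)\backslash\mathrm{SL}(2,\mathbb{Z}))$, the adjacency operator of the Cayley graph acts on $f$ by $f(g)\mapsto f(gT)+f(gT^{-1})+f(gU)+f(gU^{-1})$, and bounding $\lambda_1$ from below uniformly in $n$ is exactly the statement that $\mathrm{SL}(2,\mathbb{Z})$ has property $(\tau)$ with respect to the congruence family $\{\Gamma(n)\}$. Selberg's $3/16$ theorem, combined with a transfer principle (in the spirit of Brooks or Burger) that compares the continuous hyperbolic Laplacian spectrum on $\mathbb{H}^2/\Gamma(n)$ to a combinatorial Laplacian on a $\Gamma(n)$-equivariant tessellation of $\mathbb{H}^2$, yields such a uniform gap $\lambda_1 \geq c > 0$. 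Feeding this bound into Theorem~\ref{thm:cheeger} then delivers the expander property in its equivalent Cheeger formulation.

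The chief obstacle is precisely this transfer step. Selberg's theorem is an automorphic statement about the continuous Laplacian, whereas we need a bound on a very particular discrete graph Laplacian attached to the generating set $S_n$; the two operators are not literally conjugate and the passage between them requires either a direct quasi-isometric comparison argument between the Cayley graph and a fundamental-domain net in $\mathbb{H}^2$, or the full Lubotzky property $(\tau)$ machinery. A more elementary route, applicable when $n = p$ is prime, is the Bourgain--Gamburd machine: a product theorem in $\mathrm{SL}(2,\mathbb{F}_p)$ coupled with an $L^2$ flattening lemma produces a spectral gap without automorphic input. Extending this uniformly to composite $n$ still typically proceeds via a strong approximation / congruence-subgroup argument, and in either route the truly delicate point is ensuring that the constant $c$ can be chosen independent of $n$.
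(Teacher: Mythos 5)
Your proposal follows essentially the same route as the paper, which likewise deduces the uniform spectral gap from Selberg's $3/16$ theorem and defers the spectral-to-combinatorial transfer to the standard references (Davidoff--Sarnak--Valette and Kowalski); your identification of $\Gamma(n)$ as the kernel of reduction, the check that $S_n$ generates $\mathrm{SL}(2,\mathbb{Z}_n)$, and the property-$(\tau)$/comparison step are exactly the ingredients of that cited argument. The Bourgain--Gamburd machine you mention is a genuinely different, Selberg-free alternative for prime modulus, but since you offer it only as a side remark, your main argument coincides with the paper's.
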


The proof uses a result of Selberg \cite{Selberg} who showed that the smallest positive eigenvalue of the Laplacian of certain hyperbolic surfaces is at least $3/16$. One can use this to a produce a lower bound on the first eigenvalue of the Laplacian on $\mathrm{Cay}(\mathrm{SL}(2,\mathbb{Z}_n); S_n)$. Full proofs are given in \cite{Kowalski, DSV}.

Lastly, it is useful to state a known result: the number of nodes of $\mathrm{Cay}(\mathrm{SL}(2,\mathbb{Z}_n); S_n)$ is: 
\begin{equation}\label{eqn:caycount}
\left|V(\mathrm{Cay}(\mathrm{SL}(2,\mathbb{Z}_n); S_n))\right| = n^3 \prod_{\text{prime } p|n} \left ( 1 - \frac{1}{p^2} \right ),
\end{equation}
hence, it is of the order of $O(n^3)$. We now study the local properties of Cayley graphs in detail.

\section{Local structure of the Cayley graphs, and the utility of negative curvature}

Recent work \cite{topping2021understanding} has suggested that the local structure of the graph $G$ underlying a GNN may play an important role in the way that information propagates around $G$. In particular, various notions of `Ricci curvature' such as Forman curvature \cite{forman2003discrete}, Ollivier curvature \cite{ollivier2007ricci,ollivier2009ricci} and balanced Forman curvature \cite{topping2021understanding} have been examined. These are all local quantities, in the sense that they depend on the structure of the graph within a small neighbourhood of each edge. In this section, we will therefore examine the local structure of the Cayley graphs $G_n = \mathrm{Cay}(\mathrm{SL}(2,\mathbb{Z}_n); S_n)$.


The various notions of curvature given above are defined for each $e$ of the graph $G$. Since, as defined by \citep{topping2021understanding}, the balanced Forman curvature of an edge depends only on local structures (i.e. triangles and squares) around that edge, they can be determined by only observing the immediate 2-hop surrounding of that edge. Formally, for an edge $e$ of a graph $G$, let $N_2(e)$ be the induced subgraph with vertices that are at most two hops away from at least one endpoint of $e$. Then the curvature of $e$ only depends on the isomorphism type of $N_2(e)$. More specifically, if $e$ and $e'$ are edges in possibly distinct graphs, and there is a graph isomorphism between $N_2(e)$ and $N_2(e')$ that sends $e$ to $e'$, then this guarantees that the curvatures of $e$ and $e'$ are equal.

This situation arises prominently in the Cayley graphs that we are considering, as follows.

\begin{proposition}\label{prop:12}
Let $s$ be one of
$$
\left(
\begin{matrix}
1 & 1 \\
0 & 1
\end{matrix}
\right), 
\qquad
\left(
\begin{matrix}
1 & 0 \\
1 & 1
\end{matrix}
\right).
$$
Let $n , n' > 18$ and let $e$ and $e'$ be $s$-labelled edges in $G_n$ and $G_{n'}$. Then there is a graph isomorphism between $N_2(e)$ and $N_2(e')$ taking $e$ to $e'$.
\end{proposition}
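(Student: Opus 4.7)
The plan is to reduce this statement to a finite computation in the free group on two generators and in $\mathrm{SL}(2, \mathbb{Z})$. The key point is that for sufficiently large $n$, the reduction $\mathrm{SL}(2, \mathbb{Z}) \twoheadrightarrow \mathrm{SL}(2, \mathbb{Z}_n)$ is injective on any fixed finite subset, so the local structure around an edge of $G_n$ stabilises and becomes independent of $n$.

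First, I would exploit the fact that in any Cayley graph, left-multiplication by any group element is a label-preserving graph automorphism. In particular, $G_n$ is edge-transitive within each label class, so after a left translation we may assume both $e$ and $e'$ are the $s$-labelled edge from the identity $I$ to $s$ (in $G_n$ and $G_{n'}$ respectively). It therefore suffices to exhibit a fixed ``universal'' graph $\tilde H$, depending only on $s$, and to show $N_2(e) \cong \tilde H$ via an isomorphism sending the edge $I \to s$ to itself.

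Next, I would construct $\tilde H$ from the free group $F$ on two symbols $\hat a, \hat b$. The Cayley graph $\mathrm{Cay}(F; \{\hat a, \hat b\})$ is a $4$-regular tree, and I let $\tilde H$ be the induced subgraph on vertices at distance $\le 2$ from either the identity of $F$ or from $\hat s$, where $\hat s \in \{\hat a, \hat b\}$ is the lift of $s$. The substitution $\hat a \mapsto a, \hat b \mapsto b$ induces a group homomorphism $\rho_n \colon F \to \mathrm{SL}(2, \mathbb{Z}_n)$, which extends to a graph homomorphism from $\tilde H$ onto $N_2(e)$ sending the edge $I \to \hat s$ to $e$. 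The proposition then follows once we establish that $\rho_n$ restricts to a graph isomorphism between $\tilde H$ and $N_2(e)$ whenever $n > 18$.

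Verifying that $\rho_n$ is a graph isomorphism reduces to two finite checks: (i) no two distinct vertices $w_1, w_2 \in V(\tilde H) \subset F$ satisfy $\rho_n(w_1) = \rho_n(w_2)$; and (ii) for $w_1, w_2 \in V(\tilde H)$ non-adjacent in $\tilde H$, the element $\rho_n(w_1^{-1} w_2)$ does not lie in $\{a^{\pm 1}, b^{\pm 1}\}$ inside $\mathrm{SL}(2, \mathbb{Z}_n)$. Each instance asks whether a specific non-identity matrix $M \in \mathrm{SL}(2, \mathbb{Z})$, drawn from a bounded list of short products in $a^{\pm 1}, b^{\pm 1}$, remains nontrivial modulo $n$. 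This holds as soon as $n$ exceeds the largest absolute value among the entries of $M - I$ across the list. The main obstacle, and the source of the bound $18$, is the bookkeeping of this enumeration: $V(\tilde H)$ contains on the order of a few dozen elements, so one must tabulate a few hundred matrix products over $\mathbb{Z}$, use the Fibonacci-like growth of words in $a$ and $b$ to keep their entries bounded, and read off the worst case. Once the tabulation is complete, $n > 18$ is the explicit threshold it forces.
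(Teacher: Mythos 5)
Your overall strategy---translate both edges to the identity and compare each $N_2(e)$ with a single ``universal'' model via the homomorphism induced by reduction mod $n$---is the right one, and it is essentially the paper's strategy. But your choice of universal model is wrong, and this is a genuine gap rather than a bookkeeping issue. You take $\tilde H$ to be a ball in the Cayley graph of the \emph{free} group on two generators, i.e.\ a piece of the $4$-regular tree. However, the matrices $a = \left(\begin{smallmatrix}1&1\\0&1\end{smallmatrix}\right)$ and $b = \left(\begin{smallmatrix}1&0\\1&1\end{smallmatrix}\right)$ do \emph{not} generate a free group (that is Sanov's theorem for the matrices with $2$'s, not $1$'s): in $\mathrm{SL}(2,\mathbb{Z})$ one has the braid-type relation $ab^{-1}a = b^{-1}ab^{-1}$, both sides being $\left(\begin{smallmatrix}0&1\\-1&0\end{smallmatrix}\right)$, which is a relation of length $6$. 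This relation lives entirely inside $N_2(e)$: for $e$ the edge $1 \to a$, the vertices $1,\,a,\,ab^{-1},\,ab^{-1}a,\,b^{-1}a,\,b^{-1}$ form a $6$-cycle of $G_n$ all of whose vertices are within two hops of an endpoint of $e$. Concretely, your check (ii) fails already over $\mathbb{Z}$, independently of $n$: with $w_1 = \hat a\hat b^{-1}\hat a$ and $w_2 = \hat b^{-1}\hat a$, which are non-adjacent in the tree, one computes $(ab^{-1}a)^{-1}(b^{-1}a) = b$, a generator, so their images \emph{are} adjacent in $G_n$. Hence $N_2(e)$ is never isomorphic to your tree ball $\tilde H$ (it contains hexagons), and no threshold on $n$ can rescue the claimed isomorphism $\rho_n \colon \tilde H \to N_2(e)$.

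The fix is to take the universal model inside $G_\infty = \mathrm{Cay}(\mathrm{SL}(2,\mathbb{Z}); S_\infty)$ rather than inside the free group: show that the projection $\mathrm{SL}(2,\mathbb{Z}) \to \mathrm{SL}(2,\mathbb{Z}_n)$ restricts to a graph isomorphism $N_2(e_\infty) \to N_2(e)$, which is exactly what the paper does. Surjectivity is as you argue (paths of length $\le 2$ lift), and injectivity is the only place $n$ enters: if two vertices of $N_2(e_\infty)$ were identified, some nontrivial element of the congruence kernel $K_n$ would lie within distance $6$ of the identity, hence be a word of length $\le 6$ in $a^{\pm1}, b^{\pm1}$; since $\|a\| = \|b\| = (1+\sqrt5)/2$ (this is the rigorous form of your ``Fibonacci-like growth'' remark), such a word has operator norm at most $\bigl((1+\sqrt5)/2\bigr)^6 < 18$, while any nontrivial element of $K_n$ has an entry of absolute value at least $n-1$ and so norm at least $n-1$; this forces $n \le 18$. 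Note also that once the model is taken in $\mathrm{SL}(2,\mathbb{Z})$, your check (ii) becomes unnecessary: adjacency in $N_2(e)$ is reflected automatically, because an extra edge between images would mean $g_1^{-1}g_2 s^{-1} \in K_n$ for some generator $s$, again a short nontrivial kernel element, handled by the same norm bound.
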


We prove Proposition \ref{prop:12} in Appendix \ref{app:proof12}. This immediately allows us to characterise the balanced Forman curvature and Ollivier curvature for all of the Cayley graphs we generate:

\begin{proposition}
The balanced Forman curvatures $\mathrm{Ric}(n)$, and the Ollivier curvatures $\kappa(n)$ of all edges of Cayley graphs $G_n$ are given by:
$$
\mathrm{Ric}(n) = \begin{cases}
0 & \text{if } n=2 \\
-1/4 & \text{if } n = 3 \\
-1/2 & \text{if } n = 4 \\
-1 & \text{if } n \geq 5,
\end{cases} \qquad \qquad \kappa(n) = \begin{cases}
0 & \text{if } n=2\\
-1/8 & \text{if } n=3\\
-1/4 & \text{if } n=4\\
-3/8 & \text{if } n=5\\
-1/2 & \text{if } n\geq 6.\\
\end{cases}
$$
\end{proposition}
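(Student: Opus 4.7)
The plan is to exploit the symmetries of $G_n$ to reduce to a single edge, invoke Proposition \ref{prop:12} to handle all sufficiently large $n$ with a single computation, and then enumerate the remaining finite cases directly.

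First, every Cayley graph is vertex-transitive under left multiplication, so the curvature at an edge depends only on its generator label. In addition, the map $\phi : \mathrm{SL}(2, \mathbb{Z}_n) \to \mathrm{SL}(2, \mathbb{Z}_n)$ defined by $\phi(g) = (g^T)^{-1}$ is a group automorphism, and a direct check shows that it sends $s = \left(\begin{smallmatrix} 1 & 1 \\ 0 & 1 \end{smallmatrix}\right)$ to $t^{-1}$ and $t = \left(\begin{smallmatrix} 1 & 0 \\ 1 & 1 \end{smallmatrix}\right)$ to $s^{-1}$. Hence $\phi$ induces a graph automorphism of $G_n$ that exchanges the two generator classes, and every edge of $G_n$ has the same balanced Forman curvature and the same Ollivier curvature. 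The problem thus reduces to computing $\mathrm{Ric}$ and $\kappa$ for a single canonical edge $e_n$, say the one joining $I$ to $s$.

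Second, by the discussion preceding Proposition \ref{prop:12}, both curvatures are determined by the isomorphism type of $N_2(e_n)$. Proposition \ref{prop:12} guarantees that this isomorphism type is independent of $n$ once $n > 18$, so the curvatures stabilise for $n > 18$ and it suffices to verify the asserted constants by (a) performing one computation in $G_n$ for an arbitrary fixed $n > 18$, and (b) performing individual computations for each $n \leq 18$. In practice, the matrices appearing in $N_2(e_n)$ can be listed explicitly and compared across values of $n$; one checks that no further ``accidental'' matrix identities occur that affect $N_2(e_n)$ once $n \geq 5$ (for the triangle/square counts entering $\mathrm{Ric}$) or once $n \geq 6$ (for the pairwise distances entering $\kappa$), which accounts for the stabilisation values $-1$ and $-1/2$ in the statement.

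Third, the computation at a fixed $n$ is elementary. Every vertex of $G_n$ has degree $4$ because $|S_n| = 2$, fixing the degree contributions in both formulae. For the balanced Forman curvature one enumerates the triangles through $e_n$ (common neighbours of $I$ and $s$) and the length-four cycles of the form $I \to x \to y \to s$ that appear in the Topping et al.\ definition; in the Cayley setting each such count reduces to checking matrix relations such as $s^{\pm 1} = t^{\pm 1}$ and $s^{\pm 1}t^{\pm 1} = t^{\pm 1}s^{\pm 1}$ modulo $n$. For the Ollivier curvature one sets up the $L^1$-Wasserstein problem between the uniform distribution on $\{s, s^{-1}, t, t^{-1}\}$ and the uniform distribution on $\{I, s^2, st, st^{-1}\}$, a $4 \times 4$ transport LP whose optimum equals $1 - \kappa(e_n)$.

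The main obstacle is purely bookkeeping: one must identify exactly which ``free-group'' relations among $s, t$ collapse in $\mathrm{SL}(2, \mathbb{Z}_n)$ for each small $n \in \{2, 3, 4, 5, 6\}$, so that the correct multiplicities of triangles and short cycles feed into the balanced Forman formula and the correct pairwise graph distances feed into the Ollivier transport. This is delicate at the boundary: at $n = 2$ the generators are involutions and the generic structure collapses drastically, while at $n = 3, 4, 5$ further intermediate identities appear and must be tracked edge by edge. Each check is mechanical, and in a polished write-up these finitely many cases are naturally presented as a short table or delegated to a short computer-assisted enumeration.
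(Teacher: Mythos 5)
Your proposal is correct and follows essentially the same route as the paper: invoke Proposition \ref{prop:12} to conclude that both curvatures stabilise for $n > 18$, then settle the finitely many remaining values of $n$ by direct (computer-assisted) computation. The extra ingredients you supply---vertex-transitivity plus the automorphism $g \mapsto (g^T)^{-1}$ exchanging the two generator labels, and the explicit triangle/square counts and Wasserstein LP---are sound elaborations of what the paper compresses into the phrase ``empirically computed.''
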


\begin{proof}
Proposition \ref{prop:12} implies that the balanced Forman and Ollivier curvatures are all equal for $n > 18$. Their values for $2 \leq n \leq 19$ can all be empirically computed, and are given as above.
\end{proof}

Prior work \cite{topping2021understanding}  suggests it is preferable for GNNs to operate on graphs with positive Ricci curvature, whereas our graphs $G_n$ $(n > 2)$ all have negative Ricci curvature. However, we contend that negative Ricci curvature is not in itself an impediment to efficient propagation around a GNN. Indeed, it was shown in \cite[Theorem 4]{topping2021understanding} that poor propagation arises when the balanced Forman curvature is close to $-2$, specifically if it is at most $-2 + \delta$ for some $\delta > 0$. Here, $\delta$ is required to satisfy certain inequalities. But, with certainty, $\delta = 1$ can \emph{never} be satisfied in the hypotheses of \cite[Theorem 4]{topping2021understanding}.

Furthermore, positive Ricci curvature may have \emph{downsides} when used for GNNs. One significant downside can be derived using the main result of \citep{salez2021sparse}, which says that the three properties of expansion, sparsity and non-negative Ollivier curvature are incompatible, in the following sense. 

\begin{theorem}\label{thm:14}
For any $\delta > 0$ and $\Delta > 0$, there are only finitely many graphs with maximum vertex degree $\Delta$,
Cheeger constant at least $\delta$ and non-negative Ollivier curvature.
\end{theorem}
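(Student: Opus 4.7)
The plan is to deduce Theorem \ref{thm:14} from the main result of \cite{salez2021sparse} by a short compactness argument. Salez establishes a quantitative form of the incompatibility alluded to in the preamble: for any finite connected graph $G$ with $\Delta(G) \leq \Delta$ and Cheeger constant $h(G) \geq \delta$, the minimum Ollivier curvature over the edges of $G$ is bounded above by a quantity depending only on $\delta$, $\Delta$ and $|V(G)|$ which tends to $-\infty$ as $|V(G)| \to \infty$ with $\delta$ and $\Delta$ held fixed. In particular, once $|V(G)|$ is sufficiently large relative to $\delta$ and $\Delta$, at least one edge of $G$ must carry strictly negative Ollivier curvature.

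Granting this quantitative input, the proof of Theorem \ref{thm:14} is essentially automatic. Suppose for contradiction that there were infinitely many pairwise non-isomorphic finite graphs $G_1, G_2, \ldots$ satisfying $\Delta(G_i) \leq \Delta$, $h(G_i) \geq \delta$, and $\kappa(e) \geq 0$ for every edge $e$ of $G_i$. Since the number of graphs (up to isomorphism) on at most $N$ vertices is finite for every $N$, after passing to a subsequence we may assume $|V(G_i)| \to \infty$. Along this subsequence, $\Delta$ and $\delta$ are still uniform bounds, so Salez's estimate forces $\min_{e \in E(G_i)} \kappa(e) \to -\infty$, which contradicts the standing assumption that all edge curvatures are non-negative. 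Hence only finitely many such graphs can exist.

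The genuine obstacle, and the place where all the work lies, is the Salez estimate itself: it rests on a delicate optimal-transport argument relating the Ollivier curvature of an edge to a local mixing quantity and, ultimately, to a Cheeger-type isoperimetric bound. Here we use it as a black-box input, and the remaining content of Theorem \ref{thm:14} is simply the pigeonhole step that upgrades the quantitative estimate into a finiteness statement; no further curvature or spectral analysis is needed.
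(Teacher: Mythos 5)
There is a genuine gap, and it lies exactly where you place your black box. The ``quantitative form'' you attribute to Salez --- that for graphs with $\Delta(G) \leq \Delta$ and $h(G) \geq \delta$ the minimum Ollivier curvature tends to $-\infty$ as $|V(G)| \to \infty$ --- is not Salez's theorem, and it cannot be true of any theorem: Ollivier curvature on a graph of bounded degree is uniformly bounded below (for adjacent vertices the Wasserstein distance between the neighbourhood measures is at most a constant, so $\kappa(e) \geq -2$ regardless of $|V(G)|$). So the claimed estimate is false as stated, and the implication you actually need --- that a sufficiently large graph with $\Delta(G)\le\Delta$ and $h(G)\ge\delta$ must have \emph{some} edge of negative curvature --- is precisely the content of Theorem \ref{thm:14} itself, not an available input. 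Your pigeonhole/compactness step (passing to a subsequence with $|V(G_i)| \to \infty$) is fine, but it is the trivial part; the analytic content is missing.

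What Salez actually proves runs in the opposite direction: if a sequence of graphs has bounded average degree-entropy and the proportion of edges with curvature below $-\epsilon$ vanishes, then for every $\rho < 1$ a positive proportion of the eigenvalues of the lazy random walk operator exceed $\rho$, i.e.\ the spectral gap collapses. The remaining work --- which your proposal omits --- is to convert this spectral conclusion into a contradiction with $h(G_i) \geq \delta$. In the paper this is done by taking $\rho = 1 - \delta^2/(4\Delta^2)$, so $\mu_1(G_n) \geq \rho$, translating to the normalised Laplacian via $\lambda'_1 = 2 - 2\mu_1 \leq \delta^2/(2\Delta^2)$, invoking the Cheeger-type inequality $\phi(G_n) \leq \sqrt{2\lambda'_1(G_n)} \leq \delta/\Delta$ for the conductance, and then using the degree bound $\Delta$ to pass from conductance (volume-normalised) to the Cheeger constant (cardinality-normalised), yielding $h(G_n) \leq \delta$ and the desired contradiction. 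To repair your argument you would need to carry out this spectral-to-isoperimetric step (or some equivalent), rather than assuming a curvature blow-up that cannot occur.
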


We prove Theorem \ref{thm:14} in Appendix \ref{app:theo14}. Furthermore, quoting directly from \cite{salez2021sparse}:

\emph{``The high-level message is that on large sparse graphs, non-negative curvature (in an even weak sense) induces extremely poor spectral expansion. This stands in stark contrast with the traditional idea -- quantified by a broad variety of functional inequalities over the past decade -- that non-negative curvature is associated with good mixing behavior.''}

In our view, it is highly desirable that the graphs used for GNNs have high Cheeger constants, in the sense of globally lacking bottlenecks. Having bounded vertex degree is certainly useful too, since it implies that the graphs will be sparse, and the nodes will not have to handle ever-increasing neighbourhoods for message passing as graphs grow larger in size. 

However, by proving Theorem \ref{thm:14}, we showed non-negative Ollivier curvature is \emph{incompatible} with these properties for sufficiently large graphs. Specifically, given the \emph{finite} supply of non-negatively curved sparse graphs, we can define $N'$ as the largest number of nodes of such graphs. Then, for all graphs $G$ where $|V(G)|>N'$, we will be \emph{unable} to produce a computational graph for a GNN which is non-negatively curved everywhere. It remains an interesting challenge to provide an bound on $N'$ (as a function of $\delta$ and $\Delta$). It is possible that a careful analysis of \citep{salez2021sparse} may provide this.

Further, while the expander graphs we generate are negatively-curved at $-1$ everywhere, and we will empirically show this helps alleviate oversquashing, we also believe that it is worthy of further investigation to theoretically examine whether performance of GNNs decreases significantly when the curvature is less than $-1$.

The negative curvature of each edge in $G_n$ implies that they are locally `tree-like'. In Appendix \ref{app:treelike}, we 
make this statement precise by showing that $G_n$ is `tree-like' up to scale $c \log (n)$ about each node, for $c \simeq (1/2) (\log ((1 + \sqrt{5})/2))^{-1} $ (see Figure \ref{fig:cayley} (Right) for a schematic view).

This tree-like structure might seem, at first, to be counter-productive for good propagation across the graphs $G_n$. Indeed, GNNs based on trees have been shown to have provably poor performance \cite{alon2020bottleneck}. The reason for this seems to be two-fold. On the one hand, trees have small Cheeger constant. Indeed, any tree $G$ on $n$ vertices has a Cheeger constant $1 / \lfloor n/2 \rfloor$, since we may find an edge that, when removed, decomposes the graph into subgraphs with $\lfloor n/2 \rfloor$ and $\lceil n/2 \rceil$ vertices. As discussed in Section \ref{sec:thy} and in \cite{topping2021understanding}, when a graph has small Cheeger constant, its performance when used as a template for a GNN is likely to become poor. Secondly, GNNs based on trees are susceptible to oversquashing. For a $k$-regular infinite tree, there are $k(k-1)^{r-1}$ vertices at distance $r$ from a given vertex. Hence, if information is to be propagated at least distance $r$ from a given vertex, then seemingly an exponential amount of information is required to be stored.

However, neither of these issues are problematic for a GNN based on the Cayley graph $G_n$. By Theorem \ref{thm:expander}, their Cheeger constants are bounded away from $0$. Secondly, although they are tree-like locally, this is only true up to scale $O(\log n)$. In fact, the $r$-neighbourhood of any vertex is the whole graph $G_n$ as soon as $r > C \log n$, for some constant $C$, by Theorem \ref{thm:diam}. Being tree-like up to distance $O(\log n)$ does not lead to a requirement to store too much information as the message propagates. This is because $k(k-1)^{r-1}$ is polynomial in $n$ when $r \leq O(\log n)$. Beyond this scale, there exist many additional connections, which lead to many possible paths joining any pair of vertices. The perspective of information transfer also gives rise to another perspective in which expanders fare very favourably: the \emph{mixing time} of their corresponding Markov chain (see Appendix \ref{app:mixing} for details).

\section{Expander graph propagation}

Let an input to a graph neural network be a node feature matrix ${\bf X}\in\mathbb{R}^{|V|\times d}$, and an adjacency matrix ${\bf A}\in\mathbb{R}^{|V|\times|V|}$. This setup is such that the feature vector of node $u$, $\vec{x}_u\in\mathbb{R}^d$, can be recovered by taking an appropriate row from ${\bf X}$. Note that the adjacency information can also be fed in an edge-list manner, which is desirable from a scalability perspective. Further, each edge in the graph may be endowed with additional features rather than a single real scalar. None of the above modifications would change the essence of our findings; we use a matrix formalism here purely for simplicity.

There exist many ways in which the computed Cayley graph $\mathrm{Cay}(\mathrm{SL}(2, \mathbb{Z}_n); S_n)$ can be leveraged for message propagation, and exploring these variations could be very useful for future work. Here, we opt for a simple construction: interleave running a standard GNN over the given input structure, followed by running another GNN layer over the relevant Cayley graph. If we let ${\bf A}^{\text{Cay}(n)}$ be an adjacency matrix derived from $\mathrm{Cay}(\mathrm{SL}(2, \mathbb{Z}_n); S_n)$, this implies:
\begin{equation}
    {\bf H} = \mathrm{GNN}(\mathrm{GNN}({\bf X}, {\bf A}), {\bf A}^{\mathrm{Cay}(n)})
\end{equation}
Here, $\mathrm{GNN}$ refers to any preferred GNN layer, such as the graph isomorphism network \citep[GIN]{xu2018powerful}:
\begin{equation}\label{eqn:gin}
    \vec{h}_u = \phi\left(\left(1 + \epsilon\right)\vec{x}_u + \sum_{v\in\mathcal{N}_u} \vec{x}_v\right)
\end{equation}
where $\mathcal{N}_u$ is the neighbourhood of node $u$, i.e. in our setup, the set of all nodes $v$ such that $a_{vu} \neq 0$. $\epsilon\in\mathbb{R}$ is a learnable scalar, and $\phi\colon\mathbb{R}^d\rightarrow\mathbb{R}^{d'}$ is a two-layer MLP. 

This procedure is iterated for a certain number of steps, after which the computed node embeddings in ${\bf H}$ can be used for any downstream task of interest---such as node classification, link prediction or graph classification. Note that, unlike \citep{alon2020bottleneck}, who apply their custom layer only at the \emph{tail} of the architecture, we apply the expander graph immediately after each layer over the input graph. We find that if the input graph given by ${\bf A}$ contains bottlenecks, applying the GNN over ${\bf A}^{\text{Cay}(n)}$ only at the end may result in oversquashing occurring before any expander graph propagation can take place.

The setup so far assumed the number of nodes in our input graph to line up with the Cayley graph, that is, ${\bf A}^{\text{Cay}(n)}\in\mathbb{R}^{|V|\times|V|}$. However, there is no guarantee that we can find an appropriate $n$ such that $\mathrm{Cay}(\mathrm{SL}(2, \mathbb{Z}_n); S_n)$ would have $|V|$ nodes. What we can do in practice, as an approximation, is choose the smallest $n$ such that the number of nodes of $\mathrm{Cay}(\mathrm{SL}(2, \mathbb{Z}_n); S_n)$ is $\geq|V|$, then consider ${\bf A}^{\text{Cay}(n)}_{1:|V|,1:|V|}$---i.e. only the subgraph containing the first $|V|$ nodes in the Cayley graph.

There is a slight misalignment to our theory in this slicing choice---if the $|V|$ vertices in this subgraph are chosen completely arbitrarily, we risk disconnecting the graph. However, in all our experiments we construct the Cayley graph in a breadth-first manner, starting from the identity element as ``node zero''. Hence, the node at index $i$ is always guaranteed to be reachable from the nodes at lower indices ($j < i$), and the graph cannot be disconnected under this construction. More interesting strategies for this step can also be considered in the future. Note that, much like the fully connected graph used by \citep{alon2020bottleneck}, we interpret the Cayley graph mainly as a \emph{template} for global information propagation, in order to relieve bottlenecks in a scalable way. Our interpretation, hence, assumes that the efficient diffusion of information over the whole graph is of benefit to the learning task we perform. When this is not the case, it might be worthwhile to construct expanders that somehow align with the input graph, but no such expander constructions are currently known, to the best of our knowledge. There is also a possible effect of \emph{stochasticity} due to arbitrarily having to align the Cayley graph's nodes to the input graph---which would not appear when using master nodes or fully-connected graphs---though our preliminary experiments did not observe any such negative effects. 

Algorithm 1 summarises the steps of our proposed EGP model. As direct corollaries of results we proved or demonstrated, we note that EGP satisfies all four of our desirable criteria: ({\bf C1}) by Theorem \ref{thm:diam} (so long as logarithmically many layers are applied), ({\bf C2}) by Theorem \ref{thm:cheeger} (high Cheeger constant implies no bottlenecks), ({\bf C3}) by the fact our Cayley graphs are 4-regular and hence sparse, and ({\bf C4}) by the fact we can generate a Cayley graph of appropriate size without detailed analysis of the input---we may precompute a ``bank'' of Cayley graphs of various sizes to use in an ad-hoc manner.

\begin{algorithm}
\caption{Expander graph propagation (EGP) forward pass}
	\SetKwInOut{Input}{Inputs}\SetKwInOut{Output}{Output}
    \Input{Node features ${\bf X}\in\mathbb{R}^{|V|\times d}$, Adjacency matrix ${\bf A}\in\mathbb{R}^{|V|\times |V|}$}
    \Output{Node embeddings ${\bf H}$}
    \BlankLine
    \tcp{Choose the smallest Cayley graph from our family that has number of nodes equal to, or greater than, $|V|$}
    $n\leftarrow\mathrm{argmin}_{m\in\mathbb{N}}|V(\mathrm{Cay}(\mathrm{SL}(2, \mathbb{Z}_{m}); S_{m}))| \geq |V|$\tcp*{We can use Equation \ref{eqn:caycount} to determine $n$}
    \BlankLine
    $G^{\mathrm{Cay}(n)}\leftarrow  \mathrm{Cay}(\mathrm{SL}(2, \mathbb{Z}_{n}); S_{n})$
    \BlankLine
    ${\bf A}^{\mathrm{Cay}(n)}_{uv}\leftarrow \begin{cases}
    1 & {(u, v)\in E(G^{\mathrm{Cay}(n)})}\\
    0 & \mathrm{otherwise}
    \end{cases}$\tcp*{Populate adjacency matrix of the Cayley graph}
    \BlankLine
    ${\bf H}^{(0)}\leftarrow {\bf X}$\tcp*{Initialise GNN inputs}
    \BlankLine
    \For{$t\in\{1,\dots,T\}$}{
        \If{$t\mod 2 = 0$}{
        ${\bf H}^{(t)}\leftarrow\mathrm{GNN}^{(t)}({\bf H}^{(t-1)}, {\bf A})$
    \tcp*{GNN layer over input graph; e.g. Equation \ref{eqn:gin}}}
        \Else{${\bf H}^{(t)}\leftarrow \mathrm{GNN}^{\mathrm{Cay}(t)}\left({\bf H}^{(t-1)}, {\bf A}^{\mathrm{Cay}(n)}_{1:|V|,1:|V|}\right)$\tcp*{GNN layer over Cayley graph; e.g. Eq. \ref{eqn:gin}}}
        \BlankLine
    }
    \BlankLine
    	\Return{${\bf H}^{(T)}$} \tcp*{Return final embeddings for downstream use}
\end{algorithm}

\section{Empirical evaluation}

Our work provides mainly a theoretical contribution: demonstrating a simple, theoretically-grounded approach to relieving bottlenecks and oversquashing in GNNs without requiring quadratic complexity or dedicated preprocessing. Further, we prove several additional results which deepen our understanding of curvature-based analysis of GNNs, showing how our expanders can be favourable in spite of their negatively-curved edges. We now provide results that empirically supplement our claim.


{\bf Tree-NeighborsMatch\ } We start by comparing our models on the \texttt{Tree-NeighborsMatch} task (for more details, see \citet[Section 4.1]{alon2020bottleneck}). \texttt{Tree-NeighborsMatch} is a synthetic benchmark explicitly designed to test a GNN's ability to counter oversquashing, and therefore it allows us to empirically verify that EGP is capable of alleviating oversquashing. We augment the original GIN implementation from the authors \citep{alon2020bottleneck} with EGP layers, and find that it is capable of solving the task at \texttt{depth=5} {\bf at $\mathbf{100\%}$ accuracy}, demonstrating alleviated oversquashing. In comparison, baseline GIN can only achieve $29\%$ on this same task, and the best-performing GNN without EGP---i.e. propagating over the input tree only---cannot exceed $60\%$ accuracy.

{\bf OGB Datasets \ } For real-world evaluation, we leverage the established Open Graph Benchmark collection of tasks \citep[OGB]{hu2020open}. Specifically, we provide results on all of its graph classification datasets: \texttt{ogbg-molhiv}, \texttt{ogbg-molpcba}, \texttt{ogbg-ppa} and \texttt{ogbg-code2}. The first two are among the largest molecule property prediction datasets in the MoleculeNet benchmark \citep{wu2018moleculenet}. The third dataset is concerned with classifying species into their taxa, from their protein-protein association networks \citep{szklarczyk2019string,zitnik2019evolution} given as input. The fourth dataset is a \emph{code summarisation} task: it requires predicting the tokens in the name of a Python method, given the abstract syntax tree (AST) of its implementation.

We provide a summary of important dataset statistics in Appendix \ref{app:data_stats}; please see \citep{hu2020open} for detailed information. These datasets are designed to span a wide variety of domains (virtual drug screening, molecular activity prediction, protein-protein interactions, code summarisation) and sizes (from small molecules to very large syntax trees---the largest graph in \texttt{ogbg-code2} has $36,123$ nodes).

{\bf Models \ } In all four datasets, we want to \emph{directly} evaluate the empirical gain of introducing an EGP layer and completely rule out any effects from parameter count, or similar architectural decisions.

To enable this, we take inspiration from the experimental setup of \citep{alon2020bottleneck}. Our baseline model is the GIN \citep{xu2018powerful}, with hyperparameters as given by \citep{hu2020open}. We use the \emph{official} publicly available model implementation from the OGB authors \citep{hu2020open}, and modify all \emph{even} layers of the architecture to operate over the appropriately-sampled Cayley graph.

Note that our construction leaves both the parameter count and latent dimension of the model \emph{unchanged}, hence any benefits coming from optimising those have been diminished.


{\bf Results \ } The results of our evaluation are presented in Table \ref{tab:perf}. It can be observed that, in all four cases, propagating information over the Cayley graph yields improvements in mean performance---these improvements are most apparent on \texttt{ogbg-molhiv}, but also present in \texttt{ogbg-molpcba} and \texttt{ogbg-ppa}. We believe that these results provide encouraging empirical evidence that propagating information over Cayley graphs is an elegant idea for alleviating bottlenecks. We provide additional results on OGB, comparing EGP to various other oversquashing-countering methods, in Appendix \ref{app:data_stats}.

\begin{table}
    \centering
    \caption{Comparative evaluation performance on the four datasets studied. Our baseline model is a GIN \citep{xu2018powerful}, using exactly the same implementation as in \citep{hu2020open}. See Appendix \ref{app:data_stats} for ablations.}
    \begin{tabular}{lcccc} \toprule
         {\bf Model} &  {\tt ogbg-molhiv} & {\tt ogbg-molpcba} & {\tt ogbg-ppa} & {\tt ogbg-code2}\\ \midrule
         GIN & $0.7558\pm0.0140$ & $0.2266\pm0.0028$ & $0.6892 \pm 0.0100$ & $0.1495 \pm 0.0023$\\ 
         GIN + EGP & ${\bf 0.7934}\pm0.0035$ & ${\bf 0.2329}\pm0.0019$ & ${\bf 0.7027}\pm0.0159$ & ${\bf 0.1497}\pm0.0015$\\\bottomrule
    \end{tabular}
    \label{tab:perf}
\end{table}

\section{Conclusion}
In this paper, we have presented expander graph propagation (EGP), a novel and elegant approach to alleviating bottlenecks in graph representation learning, which provably supports global communication while not requiring quadratic complexity or dedicated preprocessing of the input. 

To this end, we offered a detailed theoretical overview of Cayley graphs of special linear groups, $\mathrm{Cay}(\mathrm{SL}(2,\mathbb{Z}_n); S_n )$. We cite proofs that these graphs have highly favourable properties for information propagation in graph neural networks: they are sparse and $4$-regular, they have logarithmic diameter, and they can be efficiently precomputed by a simple procedure that does not rely on the input structure. We show that, in spite of having negatively curved edges, our findings do not violate any prior results on understanding oversquashing via curvature. Even under a simple intervention---interleaving EGP layers inbetween standard GNN layers---we have been able to recover significant performance returns without changing the parameter count or latent space dimensionality.

We hope that our work serves as a foundation for further work on deploying Cayley graphs---or other expander families---within the context of GNNs. 

\section*{Acknowledgements}

We would like to thank the developers of the Open Graph Benchmark \citep{hu2020open}, whose open-source implementations made our experiments much easier to prepare. In addition, we are very grateful to Francesco Di Giovanni for his very thoughtful discussions about our work and its broader implications, and especially for bringing to our attention the important work from Salez \citep{salez2021sparse}. We also thank Alex Davies, Bogdan Georgiev and Charles Blundell for reviewing the paper prior to submission, and all of our anonymous reviewers for their diligent feedback.

This research was conceived while all authors were visiting the Institute for Advanced Study (IAS) in Princeton. The surreal atmosphere at the IAS strongly stimulated the creative development of research ideas at the intersection of machine learning and pure mathematics. We would like to thank the IAS for kindly inviting us to visit, and hope to come again soon!

\bibliographystyle{unsrtnat}
\bibliography{expander-summary-refs}


\appendix

\section{Proof of Proposition \ref{prop:12}}\label{app:proof12}

\emph{
Let $s$ be one of
$$
\left(
\begin{matrix}
1 & 1 \\
0 & 1
\end{matrix}
\right), 
\qquad
\left(
\begin{matrix}
1 & 0 \\
1 & 1
\end{matrix}
\right).
$$
Let $n , n' > 18$ and let $e$ and $e'$ be $s$-labelled edges in $G_n$ and $G_{n'}$. Then there is a graph isomorphism between $N_2(e)$ and $N_2(e')$ taking $e$ to $e'$.}

\begin{proof}
Note first that, by the homogeneity of the Cayley graphs $G_n$ and $G_{n'}$, we may assume that $e$ and $e'$ emanate from the identity vertex of each graph.

Let $G_\infty$ be the Cayley graph of $\mathrm{SL}(2, \mathbb{Z})$ with respect to the generators 
$$
S_\infty = \left \{ \left(
\begin{matrix}
1 & 1 \\
0 & 1
\end{matrix}
\right), 
\left(
\begin{matrix}
1 & 0 \\
1 & 1
\end{matrix}
\right) \right \}.
$$
Let $e_\infty$ be the $s$-labelled edge emanating from the identity vertex of $G_\infty$. The quotient homomorphism
$$\mathrm{SL}(2, \mathbb{Z}) \rightarrow \mathrm{SL}(2, \mathbb{Z}_n)$$
induces a graph homomorphism $G_\infty \rightarrow G_n$ sending $e_\infty$ to $e$. We will show that it restricts to a graph isomorphism
$$N_2(e_\infty) \rightarrow N_2(e).$$
As there is a similar graph isomorphism $N_2(e_\infty) \rightarrow N_2(e')$, the proposition will follow.

Note that two elements of $\mathrm{SL}(2, \mathbb{Z})$ map to the same element of $\mathrm{SL}(2, \mathbb{Z}_n)$ if and only if they differ by multiplication by an element of the kernel $K_n$. This is
$$K_n = 
\left \{
\left (
\begin{matrix}
a & b \\
c & d
\end{matrix}
\right )
 \in \mathrm{SL}(2, \mathbb{Z}) :
 a \equiv d \equiv 1 \text{ mod } n \text{ and } b \equiv c \equiv 0 \text{ mod } n
\right \}.$$

The graph homomorphism sends edges to edges, and so it is distance non-increasing. Hence it certainly sends $N_2(e_\infty)$ to $N_2(e)$. It is also clearly surjective, because any element of $N_2(e)$ is reached from an endpoint of $e$ by a path of length at most $2$, and there is a corresponding path in $N_2(e_\infty)$.

We just need to show that this is an injection. If not, then two distinct vertices $g_1$ and $g_2$ in $N_2(e_\infty)$ map to the same vertex in $N_2(e)$. Note then that as elements of $\mathrm{SL}(2, \mathbb{Z})$, $g_2 = g_1 k$ for some $k \in K_n$. There are paths with length at most $3$ joining the identity $1$ to $g_1$ and $g_2$ respectively. Hence, the distance in $G_\infty$ between $g_1$ and $g_2$ is at most $6$. Therefore, the distance between $1$ and $g_1^{-1} g_2$ is at most $6$. This element  $g_1^{-1} g_2$ lies in $K_n$. We will show that when $n > 18$, the only element of $K_n$ that has distance at most $6$ from the identity is the identity itself. This will imply that $g_1^{-1} g_2 = 1$ and hence $g_1 = g_2$. But this contradicts the assumption that $g_1$ and $g_2$ are distinct vertices. Our argument follows that of \cite{margulis1982explicit}.

The operator norm $||A||$ of a matrix $A \in \mathrm{SL}(2, \mathbb{Z})$ is 
$$||A|| = \sup \{ |A(v)| : v \in \mathbb{R}^2, \, |v| = 1 \}.$$
This is submultiplicative: $||AB|| \leq ||A|| \, ||B||$ for matrices $A$ and $B$. It can be calculated
as the square root of the largest eigenvalue of $A^t A$. In our case, the operator norms satisfy
$$
\left \|
\left (
\begin{matrix}
1 & 1 \\
0 & 1
\end{matrix}
\right )
\right \|
= 
 \left \|
\left (
\begin{matrix}
1 & 0 \\
1 & 1
\end{matrix}
\right )
\right \|
= \frac{1 + \sqrt{5}}{2}.
$$
Consider an element
$$
K = \left (
\begin{matrix}
a & b \\
c & d
\end{matrix}
\right )
$$
of $K_n$ that is not the identity. Since $a \equiv d \equiv 1$ modulo $n$ and $b \equiv c \equiv 0$ modulo $n$, we deduce that at least one $|a|$, $|b|$, $|c|$ and $|d|$ is at least $n-1$. Therefore, this matrix acts on one of the vectors $(1,0)^t$ or $(0,1)^t$ by scaling its length by at least $n-1$. Therefore, $||K|| \geq n-1$. 
Suppose now that $K$ has distance at most $6$ from the identity. Then $K$ can be written as a word in the generators of $\mathrm{SL}(2, \mathbb{Z})$ with length at most $6$. Therefore, we obtain the inequality
$$||K|| \leq \left ( \frac{1 + \sqrt{5}}{2} \right)^6 < 17.95.$$
Hence, $n < 18.95$ and therefore, as $n$ is integral, $n \leq 18$.
\end{proof}

\section{Proof of Theorem \ref{thm:14}}\label{app:theo14}

\emph{
For any $\delta > 0$ and $\Delta > 0$, there are only finitely many graphs with maximum vertex degree $\Delta$,
Cheeger constant at least $\delta$ and non-negative Ollivier curvature.}

\begin{proof}
This is a consequence of the main result of Salez \cite[Theorem 3]{salez2021sparse}. This states if $G_n = (V_n, E_n)$ is a sequence of graphs with the following properties:
\begin{equation}\label{eqn:cond1}\sup_{n \geq 1} \left \{ \frac{1}{|V_n|} \sum_{v \in V_n} \mathrm{deg}(v) \log \mathrm{deg}(v) \right \} < \infty\end{equation}
\begin{equation}\label{eqn:cond2}\forall \epsilon> 0, \quad \frac{1}{|E_n|} | \{ e \in E_n: \kappa(e) < - \epsilon \} | \rightarrow 0 \text{ as } n \rightarrow \infty,
\end{equation}
then
$$\forall \rho < 1, \quad \liminf_{n \rightarrow \infty} \left \{ \frac{1}{|V_n|} |\{ i: \mu_i (G_n) \geq \rho \} | \right \} > 0.$$
Here, $\kappa(e)$ is the Ollivier curvature of an edge $e$ and 
$$1 = \mu_0(G) \geq \mu_1(G) \geq \dots \geq 0$$
are the eigenvalues of the lazy random walk operator. To prove the theorem, we suppose that on the contrary,
there are infinitely many distinct graphs $G_n = (V_n, E_n)$ with maximum vertex
degree $\Delta$, Cheeger constant at least $\delta$ and non-negative Olliver curvature.
Then 
$$\sum_{v \in V_n} \mathrm{deg}(v) \log \mathrm{deg}(v) \leq |V_n| \Delta \log \Delta$$
and so condition \ref{eqn:cond1} is satsfied. Condition \ref{eqn:cond2} is trivially satisfied because the Ollivier curvature of each graph is non-negative.
Thus, we deduce that the conclusion of Salez' theorem holds. Setting $\rho = 1- (\delta^2/4 \Delta^2)$, we deduce that
a definite proportion of the eigenvalues of the lazy random walk operator are at least $1 - (\delta^2/4\Delta^2)$. In particular, $\mu_1(G_n) \geq 1 - (\delta^2/4\Delta^2)$.
Denote the eigenvalues of the normalised Laplacian by
$$0 = \lambda'_0(G_n) \leq \lambda'_1(G_n) \leq \dots$$
These are related to the eigenvalues of the lazy random walk operator by $\lambda'_i (G_n) = 2 - 2 \mu_i (G_n)$. Hence, $\lambda'_1 (G_n) \leq \delta^2/(2\Delta^2)$.
There is a variation of Cheeger's inequality that relates $\lambda'_1$ to the \emph{conductance} of the graph. To define this, one considers subsets
$A$ of the vertex set, and defines their \emph{volume} to be $\mathrm{vol}(A) = \sum_{v \in A} \mathrm{deg}(v)$. The conductance $\phi(G)$ of a graph $G$ is
$$\phi(G) = \min \left \{ \frac{|\partial A|}{\mathrm{vol}(A)} : A \subset V(G), \, 0 < \mathrm{vol}(A) \leq \mathrm{vol}(V(G))/2 \right \}.$$
Then, by \citet[Theorem 2.2]{Chung},
$$\phi(G) \leq \sqrt{2 \lambda'_1(G)}$$
Hence, in our case, $$\phi(G_n) \leq \delta / \Delta.$$
Consider any subset $A_n$ of the vertex set that realises $\phi(G_n)$. Thus $0 < \mathrm{vol}(A_n) \leq \mathrm{vol}(V_n)/2$ and
$|\partial A_n|/\mathrm{vol}(A_n) = \phi(G_n) \leq \delta / \Delta$. If $A_n$ is at most half the vertices of $G_n$, then this implies that the Cheeger constant
$h(G_n) \leq \delta$. On the other hand, if $A_n$ is more than half the vertices of $G_n$, we consider its complement $A_n^c$. Its cardinality
$|A_n^c|$ satisfies
$$|A_n^c| \geq \mathrm{vol}(A_n^c)/\Delta.$$
Hence,
$$h(G_n) \leq \frac{|\partial A^c_n|}{|A^c_n|} \leq \frac{|\partial A_n| \Delta}{\mathrm{vol}(A^c_n)} 
\leq \frac{|\partial A_n| \Delta}{\mathrm{vol}(A_n)}
= \phi(G_n) \Delta \leq \delta.$$
In either case, we deduce that the Cheeger constant of $G_n$ is at most $\delta$, contradicting one of our hypotheses.
Hence, there must have been only finitely many graphs satisfying the conditions of the theorem.
\end{proof}

\section{Cayley graph at infinity is quasi-isometric to a tree}\label{app:treelike}

As all vertices of $G_n$ look the same, we focus attention on $N_r(1)$, the $r$-neighbourhood of the identity vertex. The proof of Proposition \ref{prop:12} immediately gives the following.

\begin{proposition}
Let $r$ be a positive integer satisfying
$$r < \frac{1}{2}\left(  \log \left ( \frac{1 + \sqrt{5}}{2} \right) \right )^{-1}\log(n-1).$$ 
Then there is a graph isomorphism between the $r$-neighbourhood of the identity vertex in $G_n$ and the $r$-neighbourhood of the identity vertex in $G_\infty$. This isomorphism takes the identity vertex to the identity vertex.
\end{proposition}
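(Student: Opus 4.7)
The plan is to adapt the proof of Proposition \ref{prop:12} almost verbatim, with the specific radius there replaced by the general radius $r$. Let $\pi \colon G_\infty \to G_n$ be the graph homomorphism induced by the quotient $\mathrm{SL}(2,\mathbb{Z}) \to \mathrm{SL}(2,\mathbb{Z}_n)$. Because $\pi$ sends generator-labelled edges to generator-labelled edges and fixes the identity vertex, it is label-preserving and distance non-increasing, so it restricts to a map $N_r(1) \to N_r(1)$. Surjectivity on vertices is immediate: any vertex of $N_r(1)$ in $G_n$ is reached from $1$ by a length-$\leq r$ word in the generators, and the same word read in $G_\infty$ produces a preimage inside $N_r(1) \subset G_\infty$.

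The substantive step is injectivity, and this is exactly what the hypothesis on $r$ is designed to give. If $g_1 \ne g_2$ are vertices of $N_r(1) \subset G_\infty$ with $\pi(g_1) = \pi(g_2)$, then $k := g_1^{-1} g_2$ is a non-identity element of the congruence kernel $K_n$, expressible as a product of at most $2r$ generators by concatenating length-$\leq r$ geodesics from $g_1$ to $1$ and from $1$ to $g_2$. The operator-norm estimate used in Appendix \ref{app:proof12} then applies verbatim and yields
\[
n - 1 \;\leq\; \|k\| \;\leq\; \left(\frac{1 + \sqrt{5}}{2}\right)^{2r},
\]
with the lower bound coming from the congruence conditions defining $K_n$ (at least one entry of $k$ must have absolute value $\geq n-1$) and the upper bound from submultiplicativity together with the fact that each of the two generators has operator norm $(1+\sqrt{5})/2$. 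Taking logarithms and dividing by $2\log((1+\sqrt{5})/2)$ contradicts the hypothesis on $r$, so $g_1 = g_2$.

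Once this vertex bijection is in hand, the promotion to a graph isomorphism is automatic from the fact that $\pi$ preserves generator labels: every edge of $N_r(1) \subset G_\infty$ is a pair $(g, gs^{\pm 1})$ whose $\pi$-image is the corresponding edge $(\pi(g), \pi(g)s^{\pm 1})$ in $N_r(1) \subset G_n$, and conversely any such edge in $G_n$ lifts through the unique preimages of its endpoints. The only delicate point is therefore the operator-norm estimate in the second paragraph, and this is essentially the same calculation as in Appendix \ref{app:proof12}; no new ideas are required beyond replacing the specific constant $6$ there by $2r$, and then rearranging the resulting inequality to read off the stated bound on $r$.
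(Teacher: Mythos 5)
Your proof is correct and takes essentially the same route as the paper's: restrict the homomorphism $G_\infty \rightarrow G_n$ induced by the quotient map to the $r$-neighbourhood of the identity, get surjectivity by lifting words in the generators, and get injectivity from the operator-norm estimate $n-1 \leq \|k\| \leq \left(\frac{1+\sqrt{5}}{2}\right)^{2r}$ for a hypothetical non-trivial kernel element, which contradicts the stated bound on $r$. The paper's own proof is simply a terser version of this, citing the argument of Proposition \ref{prop:12} with the constant $6$ replaced by $2r$, so no new ideas are involved on either side.
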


\begin{proof}
As shown in the proof of Proposition \ref{prop:12}, there is a graph homomorphsm from $N_r(1)$ in $G_\infty$ to $N_r(1)$ in $G_n$ that is a surjection. If it fails to be an injection, then 
there is a non-trivial element $K$ in the kernel $K_n$ of $\mathrm{SL}(2, \mathbb{Z}) \rightarrow \mathrm{SL}(2, \mathbb{Z}_n)$ satisfying
$$||K|| \leq \left ( \frac{1 + \sqrt{5}}{2} \right)^{2r}.$$
But any non-trivial element $K$ in $K_n$ satisfies 
$$||K|| \geq n-1.$$
Rearranging gives the required inequality.
\end{proof}

This raises the question of the local structure of $G_\infty$. The answer is well-known: it is `tree-like'. Specifically, it is quasi-isometric to a tree. The formal definition of quasi-isometry is as follows.

\begin{definition}
A \emph{quasi-isometry} between two metric spaces $(X_1, d_1)$ and $(X_2, d_2)$ is a function $f \colon X_1 \rightarrow X_2$ that satisfies the following two conditions:
\begin{enumerate}
\item there are constants $c, C > 0$ such that, for every $x,x' \in X_1$
$$c \, d_1(x, x') - c \leq d_2(f(x), f(x')) \leq C \, d_1(x,x') + C,$$
\item there is a constant $K \geq 0$ such that for every $y \in X_2$, there is an $x \in X_1$ with $d_2(f(x), y) \leq K$.
\end{enumerate}
If there is such a quasi-isometry, we say that $(X_1, d_1)$ and $(X_2, d_2)$ are \emph{quasi-isometric}.
\end{definition}

This forms an equivalence relation on metric spaces. When two metric spaces are quasi-isometric, they are viewed as being `essentially the same' at large scales. 

When $S$ and $S'$ are finite generating sets for a group $\Gamma$, the graphs $\mathrm{Cay}(\Gamma; S)$ and $\mathrm{Cay}(\Gamma; S')$ are quasi-isometric. Hence, the quasi-isometry type of a finitely generated group is well-defined, and this is the central object of study in geometric group theory.

The group $\mathrm{SL}(2, \mathbb{Z})$ has a finite-index subgroup that is a free group $F$ \citep{serre2002trees}. If $S'$ denotes a free generating set for $F$, then $\mathrm{Cay}(F; S')$ is a tree. As passing to a finite-index subgroup preserves its quasi-isometry class, we deduce that the Cayley graph $G_\infty = 
\mathrm{Cay}(\mathrm{SL}(2, \mathbb{Z}); S_\infty))$ is indeed quasi-isometric to a tree, as claimed above.

\section{Mixing time properties of expander graphs}\label{app:mixing}

Expanders are well known to have small mixing time, in the following sense.

Let $G$ be a graph. We will consider probability distributions $\pi$ on $V(G)$. The lazy random walk operator $M$ acts on probability distributions as follows.
We think of $\pi(v)$ as being the probability of the random walk being at vertex $v$. If the current location of the walk is at $v$, then at the next step of the walk, either we stay put with probability $1/2$ or we move to one of its neighbours with equal probability. Then $M\pi$ is the new probability distribution.

In the case when $G$ is $k$-regular, this takes a particular simple form. The operator $M$ is represented by the matrix $(1/2)I  + (1/2k) A $, where $A$ is the adjacency matrix. In that case, any initial distribution $\pi$ converges under powers of $M$ to the uniform distribution. 

This is true for any reasonable notion of convergence, but we will use the $\| \cdot \|_1$ norm, where for two probability distributions $\pi$ and $\pi'$, 
$$\big \| \pi - \pi' \big\|_1 = \sum_{v \in V(G)} |\pi(v) - \pi'(v)|.$$

\begin{definition}
The \emph{mixing time} for a regular graph $G$ is the minimum value of $\ell$ such that for any starting probability distribution $\pi$ on the vertex set of $G$,
$$
\big \|
M^\ell \pi - u 
\big \|_1
\leq  \frac{1}{4}.
$$
Here, $u$ is the uniform probability distribution on the vertex set, and $M$ is the lazy random walk operator.
\end{definition}

Expanders have small mixing times in the following very strong sense.

\begin{theorem}\label{thm:mix}
For any $k > 0$ and $\delta > 0$, there is a constant $c > 0$ with the following property.
If $G$ is a connected $k$-regular graph on $n$ vertices with Cheeger constant at least $\delta > 0$, then the mixing time for $G$ is at most $c \log (n)$.
\end{theorem}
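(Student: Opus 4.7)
The plan is to argue via Cheeger's inequality and spectral decomposition of the lazy random walk operator $M$. Since $G$ is $k$-regular, $M = \tfrac{1}{2}I + \tfrac{1}{2k}A$ is symmetric and its eigenvalues are related to those of the normalised Laplacian $\mathcal{L}$ by $\mu_i = 1 - \lambda'_i/2$. In particular the top eigenvalue is $\mu_0 = 1$, with constant eigenvector, and laziness guarantees $\mu_i \in [0,1]$ throughout, so $M^\ell$ really does contract to the stationary distribution. The mixing question thus reduces to bounding the spectral gap $1-\mu_1 = \lambda'_1/2$ from below.

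The first step is therefore to invoke (discrete) Cheeger's inequality, in the form $\lambda'_1(G) \geq \phi(G)^2/2$, which is already used in Appendix \ref{app:theo14}. For a $k$-regular graph, the conductance and Cheeger constant are related by $\phi(G) = h(G)/k$, and we are given $h(G) \geq \delta$. Combining yields
\begin{equation*}
1 - \mu_1 \;=\; \tfrac{1}{2}\lambda'_1(G) \;\geq\; \tfrac{1}{4}\phi(G)^2 \;\geq\; \tfrac{\delta^2}{4k^2}.
\end{equation*}
This is the key quantitative input: the gap is bounded below by a constant depending only on $\delta$ and $k$.

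Next, I would translate the spectral gap into an $\ell_1$ mixing bound. Expand an arbitrary starting distribution $\pi$ in an orthonormal eigenbasis of $M$ as $\pi = u + \sum_{i\geq 1} c_i v_i$, where $u$ is the uniform distribution (the $\mu_0$-eigenvector after normalisation). Because $\pi$ is a probability distribution, $\|\pi\|_2 \leq 1$ and hence $\|\pi - u\|_2^2 \leq 1$. Then
\begin{equation*}
\bigl\|M^\ell \pi - u\bigr\|_2^2 \;=\; \sum_{i\geq 1} c_i^2 \mu_i^{2\ell} \;\leq\; \mu_1^{2\ell}\, \|\pi - u\|_2^2 \;\leq\; \mu_1^{2\ell}.
\end{equation*}
Cauchy--Schwarz (i.e. $\|x\|_1 \leq \sqrt{n}\,\|x\|_2$ on $\mathbb{R}^n$) upgrades this to $\|M^\ell \pi - u\|_1 \leq \sqrt{n}\, \mu_1^\ell$.

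Finally, I would solve for $\ell$: it suffices that $\sqrt{n}\,\mu_1^\ell \leq 1/4$, equivalently $\ell \log(1/\mu_1) \geq \tfrac{1}{2}\log n + \log 4$. Using $\log(1/\mu_1) \geq 1-\mu_1 \geq \delta^2/(4k^2)$, we may take
\begin{equation*}
\ell \;=\; \Bigl\lceil \tfrac{4k^2}{\delta^2}\bigl(\tfrac{1}{2}\log n + \log 4\bigr) \Bigr\rceil \;\leq\; c\log n
\end{equation*}
for a suitable constant $c=c(k,\delta)$, which proves the theorem. The main obstacle is really just bookkeeping: ensuring the laziness is used to keep $\mu_i \geq 0$ (so that no oscillation spoils contraction), justifying the $\ell_1 \leftrightarrow \ell_2$ passage without losing more than a $\sqrt{n}$ factor, and verifying that the $\log(1/\mu_1) \geq 1-\mu_1$ step together with the constant $\log 4$ gets absorbed into a single constant multiple of $\log n$.
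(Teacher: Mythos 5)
Your argument is correct, and it is the standard spectral proof of this fact: the paper itself states Theorem \ref{thm:mix} as a known result without supplying a proof, so there is nothing to diverge from, but your route uses exactly the ingredients the paper already deploys in Appendix \ref{app:theo14} (the relation $\lambda'_i = 2 - 2\mu_i$, laziness forcing $\mu_i \in [0,1]$, and Chung's Cheeger inequality $\lambda'_1 \geq \phi(G)^2/2$ combined with $\phi(G) = h(G)/k$ for $k$-regular graphs). The chain $\|M^\ell \pi - u\|_1 \leq \sqrt{n}\,\|M^\ell\pi - u\|_2 \leq \sqrt{n}\,\mu_1^\ell$ is justified as you state, since $\pi - u$ is orthogonal to the constant eigenvector and $\|\pi - u\|_2 \leq \|\pi\|_2 \leq 1$, and the passage $\log(1/\mu_1) \geq 1-\mu_1 \geq \delta^2/(4k^2)$ gives $\ell \leq c(k,\delta)\log n$ after absorbing the additive $\log 4$ and the ceiling (trivially fine for $n \geq 2$, and the $n=1$ and $\mu_1 = 0$ cases are vacuous). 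No gaps.
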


\section{Additional experimental details and ablations}\label{app:data_stats}

\paragraph{OGB dataset statistics}
We provide additional details on the dataset statistics for the OGB tasks we used in Table \ref{tab:data_stats}. More substantial details can be found in the OGB paper \citep{hu2020open}.

\begin{table}
    \centering
    \caption{Statistics of the three graph classification datasets studied in our evaluation.}
    \begin{tabular}{lcccc} \toprule
         {\bf Name} &  {\bf Number of graphs} & {\bf Avg. nodes/graph} & {\bf Avg. edges/graph} & {\bf Metric}\\ \midrule
         {\tt ogbg-molhiv} & $41,127$ & $25.5$ & $27.5$ & ROC-AUC\\ 
         {\tt ogbg-molpcba} & $437,929$ & $26.0$ & $28.1$ &  Avg. precision\\
         {\tt ogbg-ppa} & $158,100$ & $243.4$ & $2,266.1$ & Accuracy\\	
         {\tt ogbg-code2} & $452,741$ & $125.2$ & $124.2$ & F$_1$ score\\\bottomrule
    \end{tabular}
    \label{tab:data_stats}
\end{table}

\paragraph{Ablations on propagation graph.} Our work concerns sparse expander graphs, determined using the Cayley graphs of the special linear group. We acknowledge that this approach, while theoretically beneficial, is not the only possible way to aid global information propagation in a GNN. Therefore, in this subsection we compare against other classes of approaches.

Our additional baseline methods include: GINs with a \emph{master node}, GINs with a \emph{fully connected} layer (FA), as done in \citet{alon2020bottleneck}, and GINs with applying a recently proposed rewiring method, G-RLEF \citep{banerjee2022oversquashing}.

Note that both the FA method and G-RLEF have motivations related to expanders: the fully-connected graph in the FA method is a trivial \emph{dense} expander, whereas G-RLEF's rewiring iterations can converge to an expander for certain input graph distributions. Therefore, comparing against these methods allows us to also evaluate the impacts of expander density, as well as proximity to the input graph (since G-RLEF iteratively modifies the input graph). We run G-RLEF for $O(V)$ steps.

The results of our ablative analysis are summarised in Table \ref{tab:abl_perf}. We find that, as expected, all of our added methods outperform the baseline GIN, demonstrating that oversquashing had been alleviated. When comparing them against each other, however, we find that EGP tends to be highly competitive on two out of the three datasets considered (having the largest average overall). The fully-adjacent dense expander method remains strong on both \texttt{ogbg-molhiv} and \texttt{ogbg-molpcba}, but runs out of memory as graphs increase in size (as is the case with \texttt{ogbg-ppa}).

We find that this collection of ablation studies further supplements the analysis of EGP we have conducted, and serves as a good starting point for further investigations of expander propagation templates with various properties.

\begin{table}
    \centering
    \caption{Comparative ablation performance of various propagation templates on \texttt{ogbg-molhiv}, \texttt{ogbg-molpcba} and \texttt{ogbg-ppa}. Our baseline model is a GIN \citep{xu2018powerful}, using exactly the same implementation as in \citep{hu2020open}. All models have \emph{exactly} the same number of parameters---we only modify the connectivity in certain layers depending on the scheme. {\bf N.B.} The fully-connected graph, used in the FA approach \citep{alon2020bottleneck} can be seen as a dense expander graph, i.e. a special case of EGP. 'OOT' indicates that the method failed to approach baseline performance within five days of training time (while not converging within this time), and 'OOM' indicates out-of-memory (on a V100 GPU).}
    \begin{tabular}{lccc} \toprule
         {\bf Model} &  {\tt ogbg-molhiv} & {\tt ogbg-molpcba} & {\tt ogbg-ppa}\\ \midrule
         GIN & $0.7558\pm0.0140$ & $0.2266\pm0.0028$ & $0.6892 \pm 0.0100$\\ 
         GIN + master node & $0.7668\pm0.0096$ & $0.2527\pm0.0064$ & $0.6916\pm 0.0154$\\
         GIN + FA \citep{alon2020bottleneck} & $0.7850\pm0.0090$ & ${\bf 0.2595}\pm0.0049$ & OOM\\
         GIN + G-RLEF \citep{banerjee2022oversquashing} & $0.7802\pm0.0024$ & OOT & OOM\\
         GIN + EGP (ours) & ${\bf 0.7934}\pm0.0035$ & $0.2329\pm0.0019$ & ${\bf 0.7027}\pm0.0159$\\\bottomrule
    \end{tabular}
    \label{tab:abl_perf}
\end{table}

\end{document}